
\documentclass[conference]{IEEEtran}
\usepackage[english]{babel}

\usepackage[linesnumbered,ruled]{algorithm2e}
\usepackage{algpseudocode}
\usepackage{color}
\definecolor{g}{gray}{0.6}

\usepackage{amsthm}
\newtheorem{theorem}{Theorem}
\newtheorem{lemma}{Lemma}

\usepackage{multirow}
\usepackage{subcaption}
\usepackage{graphicx}
\usepackage{longtable}

\usepackage{soul}

\usepackage{soul}
\usepackage{booktabs}
\usepackage{arydshln}
\usepackage{amsmath}
\usepackage{amssymb}
\usepackage{float}
\usepackage{supertabular}

\usepackage{hyperref}

\definecolor{orange}{rgb}{0.858, 0.5, 0.1}
\definecolor{green}{rgb}{0.1, 0.5, 0.1}
\definecolor{blue}{rgb}{0.1, 0.3, 0.7}

%
\ifCLASSINFOpdf
\else
\fi
\hyphenation{op-tical net-works semi-conduc-tor}

\begin{document}
%
\title{Denoising Adversarial Autoencoders}

\author{\IEEEauthorblockN{Antonia Creswell}
\IEEEauthorblockA{BICV\\
Imperial College London\\
\\
Email: ac2211@ic.ac.uk}
\and
\IEEEauthorblockN{Anil Anthony Bharath}
\IEEEauthorblockA{BICV\\
Imperial College London}}


%


\maketitle

\begin{abstract}
Unsupervised learning is of growing interest because it unlocks the potential held in vast amounts of unlabelled data to learn useful representations for inference. Autoencoders, a form of generative model, may be trained by learning to reconstruct unlabelled input data from a latent representation space. More robust representations may be produced by an autoencoder if it learns to recover clean input samples from corrupted ones. Representations may be further improved by introducing regularisation during training to shape the distribution of the encoded data in the latent space. We suggest {\it denoising adversarial autoencoders}, which combine denoising and regularisation, shaping the distribution of latent space using adversarial training. We introduce a novel analysis that shows how denoising may be incorporated into the training and sampling of adversarial autoencoders. Experiments are performed to assess the contributions that denoising makes to the learning of representations for classification and sample synthesis. Our results suggest that autoencoders trained using a denoising criterion achieve higher classification performance, and can synthesise samples that are more consistent with the input data than those trained without a corruption process.\footnote{This work has been submitted to the IEEE for possible publication. Copyright may be transferred without notice, after which this version may no longer be accessible.}
\end{abstract}


%
\IEEEpeerreviewmaketitle

Modelling and drawing data samples from complex, high-dimensional distributions is challenging. {\em Generative models} may be used to capture underlying statistical structure from real-world data. A good generative model is not only able to draw samples from the distribution of data being modelled, but should also be useful for inference. 

Modelling complicated distributions may be made easier by learning the parameters of  conditional probability distributions that map intermediate, \textit{latent}, \cite{bachman2015variational} variables from simpler distributions to more complex ones \cite{bengio2014deep}. Often, the intermediate representations that are learned can be used for tasks such as retrieval or classification \cite{radford2015unsupervised,makhzani2015adversarial,salimans2016improved,vincent2008extracting}.

Typically, to train a model for classification, a deep convolutional neural network may be constructed, demanding large labelled datasets to achieve high accuracy \cite{krizhevsky2012imagenet}. Large labelled datasets may be expensive or difficult to obtain for some tasks. However, many state-of-the-art generative models can be trained without labelled datasets \cite{goodfellow2014generative, radford2015unsupervised, kingma2013auto, im2017denoising}. For example, autoencoders learn a generative model, referred to as a \textit{decoder}, by recovering inputs from corrupted \cite{vincent2008extracting, im2017denoising, bengio2013generalized} or \textit{encoded} \cite{kingma2013auto} versions of themselves. 


Two broad approaches to learning state-of-the-art generative models that do not require labelled training data include: 1) introduction of a denoising criterion \cite{vincent2010stacked,bengio2013generalized, vincent2008extracting} -- where the model learns to reconstruct clean samples from corrupted ones; 2) regularisation of the latent space to match a prior \cite{kingma2013auto, makhzani2015adversarial}; for the latter, the priors take a simple form, such as multivariate normal distributions.


\begin{figure}
    \centering
    \includegraphics[width=1.0\columnwidth]{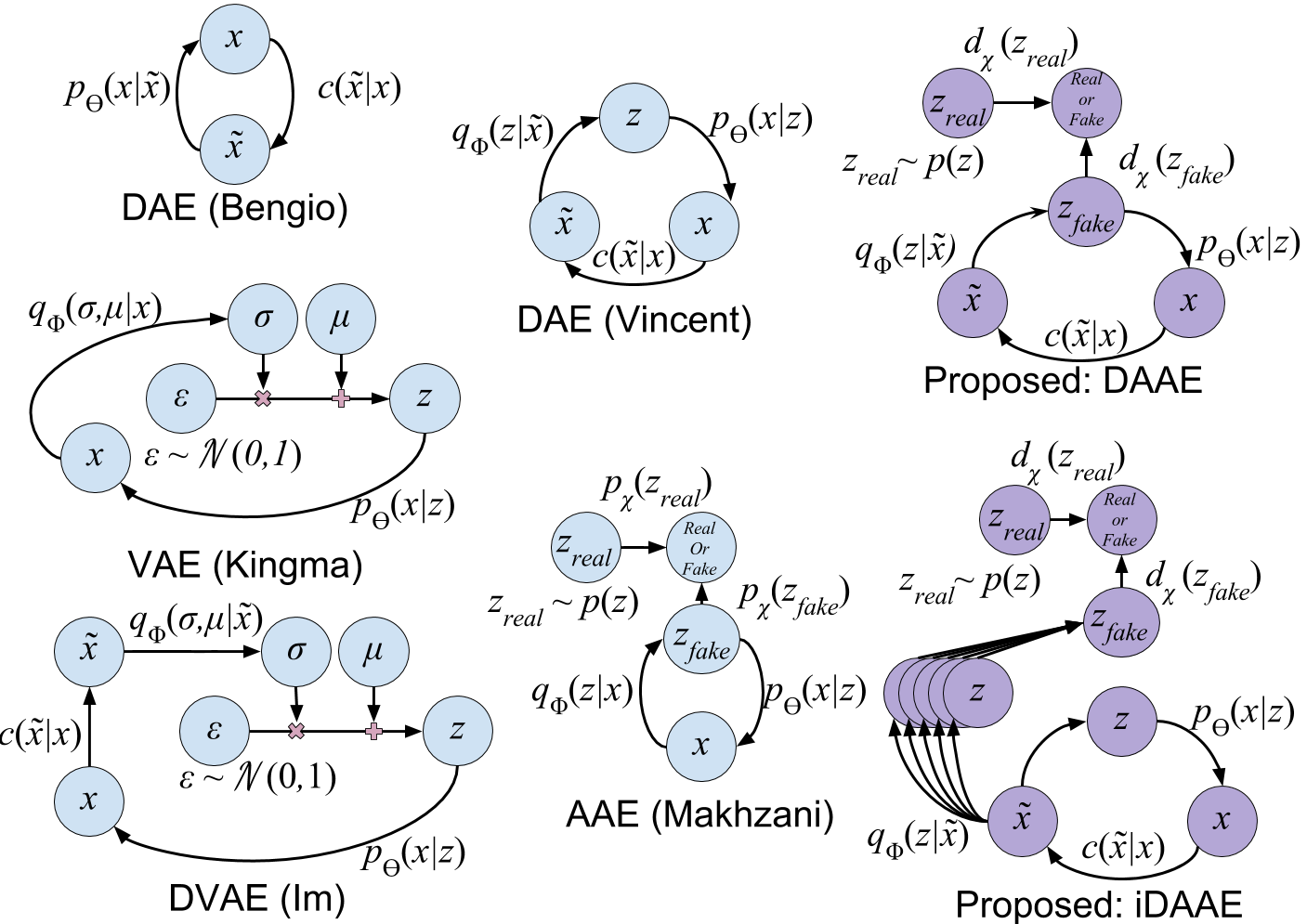}
    \caption{\textbf{Comparison of autoencoding models:} Previous works include Denoising Autoencoders (DAE) \cite{bengio2013generalized, vincent2008extracting}, Variational Autoencoders (VAE) \cite{kingma2013auto}, Adversarial Autoencoders (AAE) \cite{makhzani2015adversarial} and Denoising Variational Autoencoders (DVAE) \cite{im2017denoising}. Our contributions are the DAAE and the iDAAE models. Arrows in this diagram represent mappings implemented using trained neural networks.}
    \label{autoencoders}
\end{figure}

The denoising variational autoencoder \cite{im2017denoising} combines both denoising and regularisation in a single generative model. However, introducing a denoising criterion makes the variational cost function -- used to match the latent distribution to the prior -- analytically intractable \cite{im2017denoising}. Reformulation of the cost function \cite{im2017denoising}, makes it tractable, but only for certain families of prior and posterior distributions. We propose using adversarial training \cite{goodfellow2014generative} to match the posterior distribution to the prior. Taking this approach expands the possible choices for families of prior and posterior distributions.


When a denoising criterion is introduced to an adversarial autoencoder, we have a choice to either shape the conditional distribution of latent variables given {\em corrupted} samples to match the prior (as done using a variational approach \cite{im2017denoising}), or to shape the {\em full} posterior conditional on the {\em original} data samples to match the prior. Shaping the posterior distribution over corrupted samples does not require
additional sampling during training, but trying to shape the full conditional distribution with respect to the original data samples does. We explore both approaches, using adversarial training to avoid the difficulties posed by analytically intractable cost functions. 

Additionally, a model that has been trained using the posterior conditioned on corrupted data requires an iterative process for synthesising samples, whereas using the full posterior conditioned on the original data does not. Similar challenges exist for the denoising VAE, but were not addressed by Im et al.\cite{im2017denoising}. We analyse and address these challenges for adversarial autoencoders, introducing a novel sampling approach for synthesising samples from trained models.
 
In summary, our contributions include: 1) Two types of denoising adversarial autoencoders, one which is more efficient to train, and one which is more efficient to draw samples from; 2) Methods to draw synthetic data samples from denoising adversarial autoencoders through Markov chain (MC) sampling; 3) An analysis of the quality of features learned with denoising adversarial autoencoders through their application to discriminative tasks.

\section{Background}
\subsection{Autoencoders}
In a supervised learning setting, given a set of training data, $\{(y_i, x_i)\}_{i=1}^N$ we wish to learn a model, $f_\psi(y|x)$ that maximises the likelihood, $\mathrm{E}_{p(y|x)}f_\psi(y|x)$ of the true label, $y$ given an observation, $x$. In the supervised setting, there are many ways to calculate and approximate the likelihood, because there is a ground truth label for every training data sample.

When trying to learn a generative model, $p_\theta(x)$, in the absence of a ground truth, calculating the likelihood of the model under the observed data distribution, $\mathrm{E}_{x\sim p(x)}p_\theta(x)$,  is challenging. Autoencoders introduce a two step learning process, that allows the estimation, $p_\theta(x)$ of $p(x)$ via an auxiliary variable, $z$. The variable $z$ may take many forms and we shall explore several of these in this section. The two step process involves first learning a \textit{probabilistic encoder} \cite{kingma2013auto}, $q_\phi(z|x)$, conditioned on observed samples, and a second \textit{probabilistic decoder} \cite{kingma2013auto}, $p_\theta(x|z)$, conditioned on the auxiliary variables. Using the probabilistic encoder, we may form a training dataset, $\{(z_i, x_i)\}_{i=1}^N$ where $x_i$ is the ground truth output for $x \sim p(x|z_i)$ with the input being $z_i \sim q_\phi(z|x_i)$. The probabilistic decoder, $p_\theta(x|z)$, may then be trained on this dataset in a supervised fashion. By sampling $p_\theta(x|z)$ conditioning on suitable $z$'s we may obtain a joint distribution, $p_\theta(x,z)$, which may be marginalised by integrating over all $z$ to obtain, to $p_\theta(x)$.


In some situations the encoding distribution is chosen rather than learned \cite{bengio2013generalized}, in other situations the encoder and decoder are learned simultaneously \cite{kingma2013auto, makhzani2015adversarial, im2017denoising}.




\subsection{Denoising Autoencoders (DAEs)}
Bengio et al. \cite{bengio2013generalized} treat the encoding process as a local corruption process, that does not need to be learned. The corruption process, defined as $c(\tilde{x}|x)$ where $\tilde{x}$, the corrupted $x$ is the auxiliary variable (instead of $z$). The decoder, $p_\theta(x|\tilde{x})$, is therefore trained on the data pairs, $\{(\tilde{x}_i, x_i)\}_{i=1}^N$.

By using a local corruption process (e.g. additive white Gaussian noise \cite{bengio2013generalized}), both $\tilde{x}$ and $x$ have the same number of dimensions and are close to each-other. This makes it very easy to learn $p_\theta(x|\tilde{x})$. Bengio et al. \cite{bengio2013generalized} shows how the learned model may be sampled using an iterative process, but does not explore how representations learned by the model may transfer to other applications such as classification.

Hinton et al. \cite{hinton2006reducing} show that when auxiliary variables of an autoencoder have lower dimension than the observed data, the encoding model learns representations that may be useful for tasks such as classification and retrieval. 

Rather than treating the corruption process, $c(\tilde{x},x)$, as an encoding process \cite{bengio2013generalized} -- missing out on potential benefits of using a lower dimensional auxiliary variable -- Vincent et al. \cite{vincent2008extracting, vincent2010stacked} learn an encoding distribution, $q_\phi(z|\tilde{x})$, conditional on corrupted samples. The decoding distribution, $p_\theta(x|z)$ learns to reconstruct images from encoded, corrupted images, see the DAEs in Figure \ref{autoencoders}. Vincent et al. \cite{vincent2010stacked, vincent2008extracting} show that compared to regular autoencoders, denoising autoencoders learn representations that are more useful and robust for tasks such as classification. Parameters, $\phi$ and $\theta$ are learned simultaneously by minimisng the reconstruction error for the training set, $\{(\tilde{x_i},x_i)\}_{i=1}^M$, which does not include $z_i$. The ground truth $z_i$ for a given $\tilde{x}_i$ is unknown. The form of the distribution over $z$, to which $x$ samples are mapped, $p_\theta(z)$ is also unknown - making it difficult to draw novel data samples from the decoder model, $p_\theta(x|z)$.







\subsection{Variational Autoencoders}
Variational autoencoders (VAEs) \cite{kingma2013auto} specify a prior distribution, $p(z)$ to which $q_\phi(z|x)$ should map all $x$ samples, by formulating and maximising a variational lower bound on the log-likelihood of $p_\theta(x)$.

The variational lower bound on the log-likelihood of $p_\theta(x)$ is given by \cite{kingma2013auto}:
\begin{equation} \label{ELBO}
\log p_\theta(x) \geq \mathrm{E}_{z\sim q_\phi(z|x)}[ \log p_\theta(x|z)] - KL[q_\phi(z|x) || p(z)]
\end{equation}

The $p_\theta(x|z)$ term corresponds to the likelihood of a reconstructed $x$ given the encoding, $z$ of a data sample $x$. This formulation of the variational lower bound does not involve a corruption process. The term $KL[q_\phi(z|x) || p(z)]$ is the Kullback-Libeller divergence between $q_\phi(z|x)$ and $p(z)$. Samples are drawn from $q_\phi(z|x)$ via a re-parametrisation trick, see the VAE in Figure \ref{autoencoders}.



If $q_\phi(z|x)$ is chosen to be a parametrised multivariate Gaussian, $\mathcal{N}( \mu_\phi(x),\sigma_\phi(x))$, and the prior is chosen to be a Gaussian distribution, then $KL[q_\phi(z|x) || p(z)]$ may be computed analytically. $KL$ divergence may only be computed analytically for certain (limited) choices of prior and posterior distributions.

VAE training encourages $q_\phi(z|x)$ to map observed samples to the chosen prior, $p(z)$. Therefore, novel observed data samples may be generated via the following \textbf{simple} sampling process: $z_i \sim p(z)$, $x_i\sim p_\theta(x|z_i)$ \cite{kingma2013auto}.


Note that despite the benefits of the denoising criterion shown by Vincent et al. \cite{vincent2008extracting, vincent2010stacked}, no corruption process was introduced by Kingma et al. \cite{kingma2013auto} during VAE training.

\subsection{Denoising Variational Autoencoders}
Adding the denoising criterion to a variational autoencoder is non-trivial because the variational lower bound becomes intractable. 

Consider the conditional probability density function, $\tilde{q}_\phi(z|x)=\int q_\phi(z|\tilde{x})c(\tilde{x}|x) d\tilde{x}$, where $q_\phi(z|\tilde{x})$ is the probabilistic encoder conditioned on corrupted $x$ samples, $\tilde{x}$, and $c(\tilde{x}|x)$ is a corruption process. The variational lower bound may be formed in the following way \cite{im2017denoising}:
\[ \log p_\theta(x) \geq \mathrm{E}_{\tilde{q}_\phi(z|x)} \log \left [ \frac{p_\theta(x,z)}{q_\phi(z|\tilde{x})}\right ] \geq  \mathrm{E}_{\tilde{q}_\phi(z|x)} \log \left [ \frac{p_\theta(x,z)}{\tilde{q}_\phi(x|z)}\right ]\]

If $q_\phi(z|\tilde{x})$ is chosen to be Gaussian, then in many cases $\tilde{q}_\phi(z|x)$ will be a mixture of Gaussians. If this is the case, there is no analytical solution for $KL[\tilde{q}_\phi(z|x)|| p(z)]$ and so the denoising variational lower bound becomes analytically intractable. However there may still be an analytical solution for $KL[q_\phi(z|\tilde{x})|| p(z)]$. The denoising variational autoencoder therefore maximises $\mathrm{E}_{\tilde{q}(x|z)} \log [ \frac{p_\theta(x,z)}{q_\phi(z|\tilde{x})} ]$. We refer to the model which is trained to maximise this objective as a DVAE, see the DVAE in Figure \ref{autoencoders}. Im et al. \cite{im2017denoising} show that the DVAE achieves lower negative variational lower bounds than the regular variational autoencoder on a test dataset.

However, note that $q_\phi(z|\tilde{x})$ is matched to the prior, $p(z)$ rather than $\tilde{q}_\phi(z|x)$. This means that generating novel samples using $p_\theta(z|x)$ is not as simple as the process of generating samples from a variational autoencoder. To generate novel samples, we should sample $z_i \sim \tilde{q}_\phi(z|x)$, $x_i \sim p_\theta(x|z_i)$, which is difficult because of the need to evaluate $\tilde{q}_\phi(z|x)$. Im et al. \cite{im2017denoising} do not address this problem.

For both DVAEs and VAEs there is a limited choice of prior and posterior distributions for which there exists an analytic solution for the KL divergence. Alternatively, adversarial training may be used to learn a model that matches samples to an arbitrarily complicated target distribution -- provided that samples may be drawn from both the target and model distributions.



\section{Related Work}
\subsection{Adversarial Training}
\label{adv}
In adversarial training \cite{goodfellow2014generative} a model $g_\phi(w|v)$ is trained to produce output samples, $w$ that match a target probability distribution $t(w)$. This is achieved by iteratively training two competing models, a generative model, $g_\phi(w|v)$ and a discriminative model, $d_\chi(w)$. 
The discriminative model is fed with samples either from the generator (i.e. `fake' samples) or with samples from the target distribution (i.e. `real' samples), and trained to correctly predict whether samples are `real' or `fake'. The generative model - fed with input samples $v$, drawn from a chosen prior distribution, $p(v)$ - is trained to generate output samples $w$ that are indistinguishable from target $w$ samples in order to `fool' \cite{radford2015unsupervised} the discriminative model into making incorrect predictions. This may be achieved by the following mini-max objective \cite{goodfellow2014generative}:
\[ \min_g \max_d \mathrm{E}_{w \sim t(w)} [\log d_\chi(w)] + \mathrm{E}_{w \sim g_\phi(w|v)} [\log (1- d_\chi(w))] \]



It has been shown that for an optimal discriminative model, optimising the generative model is equivalent to minimising the Jensen-Shannon divergence between the generated and target distributions \cite{goodfellow2014generative}. In general, it is reasonable to assume that, during training, the discriminative model quickly achieves near optimal performance \cite{goodfellow2014generative}. This property is useful for learning distributions for which the Jensen-Shannon divergence may not be easily calculated.

The generative model is optimal when the distribution of generated samples matches the target distribution. Under these conditions, the discriminator is maximally confused and cannot distinguish `real' samples from `fake' ones. As a consequence of this, adversarial training may be used to capture very complicated data distributions, and has been shown to be able to synthesise images of handwritten digits and human faces that are almost indistinguishable from real data \cite{radford2015unsupervised}.
\subsection{Adversarial Autoencoders}

Makhzani et al. \cite{makhzani2015adversarial} introduce the adversarial autoencoder (AAE), where $q_\phi(z|x)$ is both the probabilistic encoding model in an autoencoder framework and the generative model in an adversarial framework.
A new, discriminative model, $d_\chi(z)$ is introduced. This discriminative model is trained to distinguish between latent samples drawn from $p(z)$ and $q_\phi(z|x)$. The cost function used to train the discriminator, $d_\chi(z)$ is:
\[ \mathcal{L}_{dis} = - \frac{1}{N}\sum_{i=0}^{N-1} [\log d_\chi (z_i)] - \frac{1}{N}\sum_{j=N}^{2N} [ \log (1- d_\chi(z_j))] \]

where $z_{i=1...N-1} \sim p(z)$ and $z_{j=N...2N} \sim q_\phi(z|x)$ and $N$ is the size of the training batch.

Adversarial training is used to match $q_\phi(z|x)$ to an arbitrarily chosen prior, $p(z)$. The cost function for matching $q_\phi(z|x)$ to prior, $p(z)$ is as follows:

\begin{equation}
\label{eqn:prior}
\mathcal{L}_{prior}= \frac{1}{N}\sum_{i=0}^{N-1} [\log(1-d_\chi(z_i))]
\end{equation}

where $z_{i=0...N-1} \sim q_\phi(z|x)$ and $N$ is the size of a training batch. 
If both $\mathcal{L}_{prior}$ and $\mathcal{L}_{dis}$ are optimised, $q_\phi(z|x)$ will be indistinguishable from $p(z)$.

In Makhzani et al.'s \cite{makhzani2015adversarial} adversarial autoencoder, $q_\phi(z|x)$ is specified by a neural network whose input is $x$ and whose output is $z$. This allows $q_\phi(z|x)$ to have arbitrary complexity, unlike the VAE where the complexity of $q_\phi(z|x)$ is usually limited to a Gaussian. In an adversarial autoencoder the posterior does not have to be analytically defined because an adversary is used to match the prior, avoiding the need to analytically compute a KL divergence. 

Makhzani et al. \cite{makhzani2015adversarial} demonstrate that adversarial autoencoders are able to match $q_\phi(z|x)$ to several different priors, $p(z)$, including a mixture of 10 2D-Gaussian distributions. We explore another direction for adversarial autoencoders, by extending them to incorporate a denoising criterion.

\section{Denoising Adversarial Autoencoder} \label{DAA}
We propose denoising adversarial autoencoders - denoising autoencoders that use adversarial training to match the distribution of auxiliary variables, $z$ to a prior distribution, $p(z)$.

We formulate two versions of a denoising adversarial autoencoder which are trained to approximately maximise the denoising variational lower bound \cite{im2017denoising}. In the first version, we directly match the posterior $\tilde{q}_\phi(z|x)$ to the prior, $p(z)$ using adversarial training. We refer to this as an integrating Denoising Adversarial Autoencoder, iDAAE. In the second, we match intermediate conditional probability distribution $q_\phi(z|\tilde{x})$ to the prior, $p(z)$. We refer to this as a DAAE.

In the iDAAE, adversarial training is used to bypass analytically intractable KL divergences \cite{im2017denoising}. In the DAAE, using adversarial training broadens the choice for prior and posterior distributions beyond those for which the KL divergence may be analytically computed.
    
\subsection{Construction} \label{construction}
The distribution of encoded data samples is given by $\tilde{q}_\phi(z|x) = \int q_\phi(z|\tilde{x})c(\tilde{x}|x)d \tilde{x}$ \cite{im2017denoising}. The distribution of decoded data samples is given by $p_\theta(x|z)$. Both $q_\phi(z|x)$ and $p_\theta(x|z)$ may be trained to maximise the likelihood of a reconstructed sample, by minimising the reconstruction cost function, $\mathcal{L}_{rec} = \frac{1}{N}\sum_{i=0}^{N-1} \log p_\theta(x|z_i)$
where the $z_i$ are obtained via the following sampling process $x_{i=0...N-1} \sim p(x)$, $\tilde{x}_i \sim c(\tilde{x}|x_i)$, $z_i \sim q_\phi(z|\tilde{x}_i)$, and $p(x)$ is distribution of the training data.

We also want to match the distribution of auxiliary variables, $z$ to a prior, $p(z)$. When doing so, there is a choice to match either $\tilde{q}_\phi(z|x)$ or $q_\phi(z|\tilde{x})$ to $p(z)$. Each choice has its own trade-offs either during training or during sampling.

\subsubsection{iDAAE: Matching $\tilde{q}_\phi(z|x)$ to a prior}
\label{v1}



In DVAEs there is often no analytical solution for the KL divergence between $\tilde{q}_\phi(z|x)$ and $p(z)$ \cite{im2017denoising}, making it difficult to match $\tilde{q}_\phi(z|x)$ to $p(z)$. Rather, we propose using adversarial training to match $\tilde{q}_\phi(z|x)$ to $p(z)$, requiring samples to be drawn from $\tilde{q}_\phi(z|x)$ during training.
It is challenging to draw samples directly from $\tilde{q}_\phi(z|x)=\int q_\phi(z|\tilde{x})c(\tilde{x}|x) d\tilde{x}$, but it is easy to draw samples from $q_\phi(z|\tilde{x})$ and so $\tilde{q}_\phi(z|x)$ may be approximated by $\frac{1}{M}\sum_{i=1}^{M} q_\phi(z|\tilde{x_i})$, $\tilde{x}_{i=1...M} \sim c(\tilde{x}|x_0)$, $x_0 \sim p(x)$ where $x \sim p(x)$ are samples from the training data, see Figure \ref{autoencoders}. Matching is achieved by minimising the following cost function:

\[ \mathcal{L}_{prior}= \frac{1}{N}\sum_{i=0}^{N-1} [\log(1-d_\chi(\hat{z}_i))]\]
where $\hat{z}_{i=0...N-1} = \frac{1}{M}\sum_{j=1}^{M} z_{i,j}$, $z_{i=1...N,j=1...M} \sim q_\phi(z|\tilde{x}_{i,j})$, $\tilde{x}_{i=1...N,j=1...M} \sim c(\tilde{x}|x_i)$, $x_{i=0...N-1} \sim p(x)$.
\label{Monte-Carlo}

\subsubsection{DAAE: Matching $q_\phi(z|\tilde{x})$ to a prior} \label{v2}
Since drawing samples from $q_\phi(z|\tilde{x})$ is trivial, $q_\phi(z|\tilde{x})$ may be matched to $p(z)$ via adversarial training. This is more efficient than matching $\tilde{q}_\phi(z|x)$ since a Monte-Carlo integration step (in Section \ref{Monte-Carlo}) is not needed, see Figure \ref{autoencoders}. In using adversarial training in place of $KL$ divergence, the only restriction is that we must be able to draw samples from the chosen prior. Matching may be achieved by minimising the following loss function:


\[ \mathcal{L}_{prior}= \frac{1}{N}\sum_{i=0}^{N-1} [\log(1-d_\chi(z_i))]\]
where $z_{i=1...N-1} \sim q_\phi(z|\tilde{x_i})$.

Though more computationally efficient to train, there are drawbacks when trying to synthesise novel samples from $p_\theta(x)$ if $q_\phi(z|\tilde{x})$ -- rather than $\tilde{q}_\phi(z|x)$ -- is matched to the prior. The effects of using a DAAE rather than an iDAAE may be visualized by plotting the empirical distribution of encodings of both data samples and corrupted data samples with the desired prior, these are shown in Figure \ref{faces:hists}.

\section{Synthesising novel samples} \label{synthesis}

In this section, we review several techniques used to draw samples from trained autoencoders, identify a problem with sampling DVAEs, which also applies to DAAEs, and propose a novel approach to sampling DAAEs; we draw strongly on previous work by Bengio et al. \cite{bengio2014deep, bengio2013generalized}.

\subsection{Drawing Samples From Autoencoders}



New samples may be generated by sampling a learned $p_\theta(x|z)$, conditioning on $z$ drawn from a suitable distribution. In the case of variational \cite{kingma2013auto} and adversarial \cite{makhzani2015adversarial} autoencoders, the choice of this distribution is simple, because during training the distribution of auxiliary variables is matched to a chosen prior distribution, $p(z)$. It is therefore easy and efficient to sample both variational and adversarial autoencoders via the following process: $z \sim p(z)$, $x \sim p_\theta(x|z)$ \cite{kingma2013auto, makhzani2015adversarial}.


The process for sampling denoising autoencoders is more complicated. In the case where the auxiliary variable is a corrupted image, $\tilde{x}$ \cite{bengio2009learning}, the sampling process is as follows: $x_0 \sim p(x)$, $\tilde{x}_0 \sim c(\tilde{x}|x_0)$, $x_1 \sim p_\theta(x|\tilde{x}_0)$  \cite{bengio2013generalized}. In the case where the auxiliary variable is an encoding, \cite{vincent2008extracting, vincent2010stacked} the sampling process is the same, with $p_\theta(x|\tilde{x})$ encompassing both the encoding and decoding process.

However, since a denoising autoencoder is trained to reconstruct corrupted versions of its inputs, $x_1$ is likely to be very similar to $x_0$. Bengio et al. \cite{bengio2013generalized} propose a method for iteratively sampling denoising autoencoders by defining a Markov chain whose stationary distribution - under certain conditions - exists and is equivalent, under certain assumption, to the training data distribution. This approach is generalised and extended by Bengio et al. \cite{bengio2014deep} to introduce a latent distribution with no prior assumptions on $z$. 

We now consider the implication for drawing samples from denoising adversarial autoencoders introduced in Section \ref{construction}. By using the iDAAE formulation (Section~ \ref{v1}) -- where $\tilde{q}_\phi(z|x)$ is matched to the prior over $z$ -- $x$ samples may be drawn from $p_\theta(x|z)$ conditioning on $z \sim p(z)$. However, if we use the DAAE -- matching $q_\phi(z|\tilde{x})$ to a prior -- sampling becomes non-trivial.

On the surface, it may appear easy to draw samples from DAAEs (Section~ \ref{v2}), by first sampling the prior, $p(z)$ and then sampling $p_\theta(x|z)$. However, the full posterior distribution is given by $\tilde{q}_\phi(z|x) = \int q_\phi(z|\tilde{x})c(\tilde{x}|x)d\tilde{x}$, but only $q_\phi(z|\tilde{x})$ is matched to $p(z)$ during training (See figure \ref{faces:hists}). The implication of this is that, when attempting to synthesize novel samples from $p_\theta(x|z)$, drawing samples from the prior, $p(z)$, is unlikely to yield samples consistent with $p(x)$. This will become more clear in Section \ref{sampling}.

\begin{figure}
\centering
\begin{subfigure}{0.45\columnwidth}
\includegraphics[width=\columnwidth]{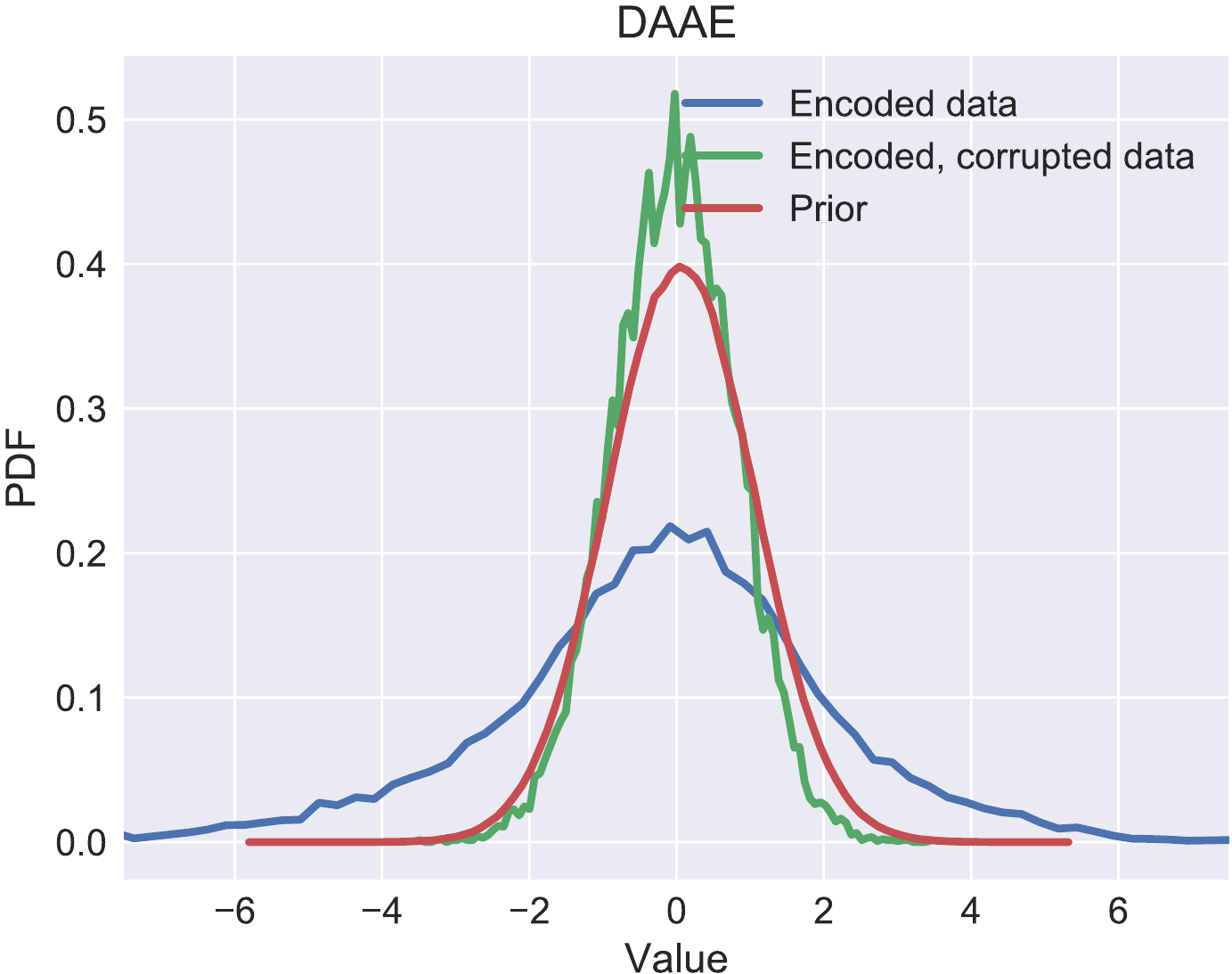}
\caption{\textbf{DAAE}}
\end{subfigure}
\begin{subfigure}{0.45\columnwidth}
\includegraphics[width=\columnwidth]{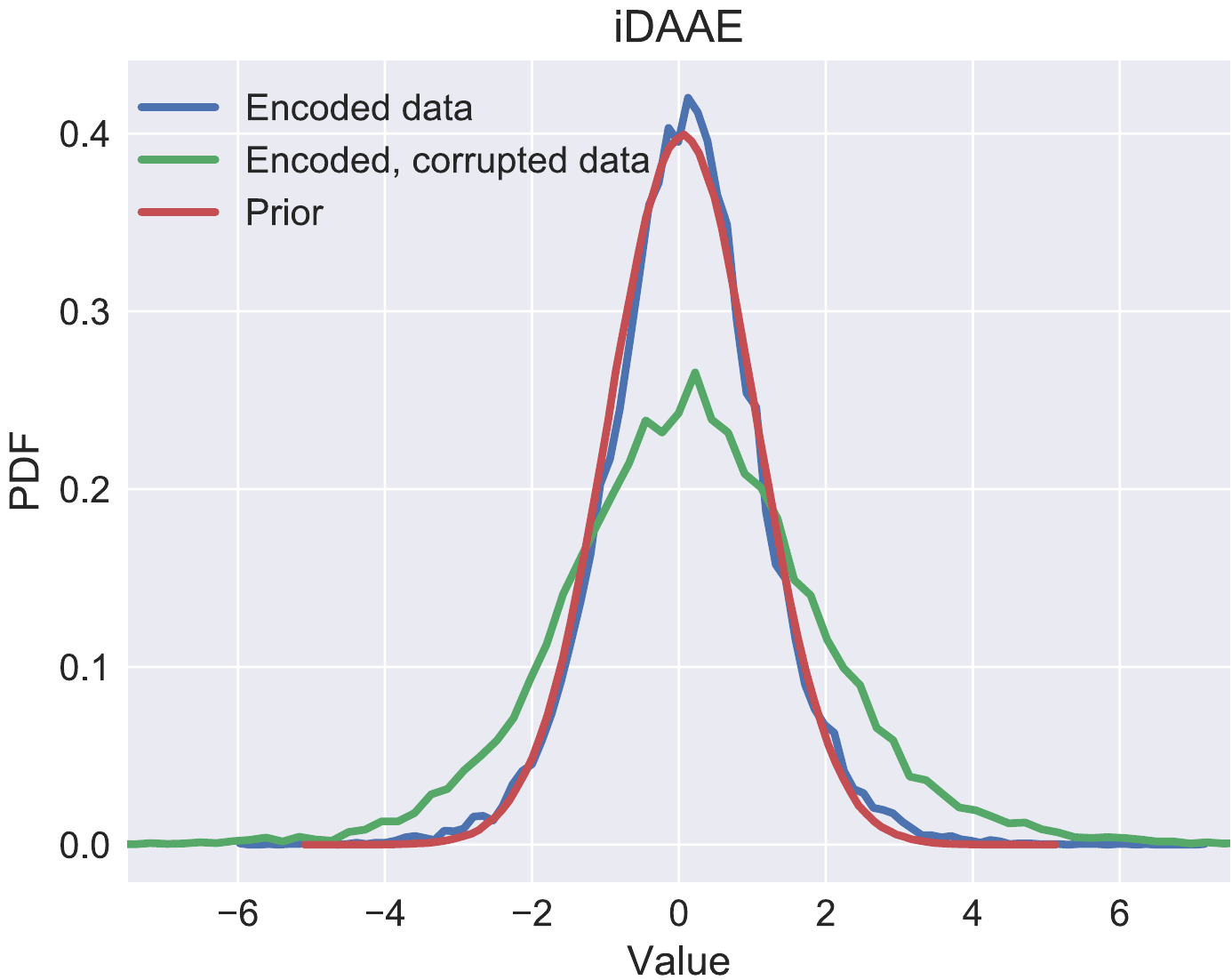}
\caption{\textbf{iDAAE}}
\end{subfigure}
\caption{ \textbf{Compare how iDAAE and DAAE match encodings to the prior when trained on the CelebA dataset}. {\color{blue} \textbf{encoding}} refers to $q_\phi(z|x)$, {\color{red} \textbf{prior}} refers to the normal prior $p(z)$, {\color{green}\textbf{encoded corrupted data}} refers to $q_\phi(z|\tilde{x})$ (a) DAAE: Encoded corrupted samples match the prior, (b) iDAAE: Encoded data samples match the prior.}
\label{faces:hists}
\end{figure}

\subsection{Proposed Method For Sampling DAAEs}
\label{sampling}


Here, we propose a method for synthesising novel samples using trained DAAEs. In order to draw samples from $p_\theta(x|z)$, we need to be able to draw samples from $\tilde{q}_\phi(z|x)$.

To ensure that we draw novel data samples, we do not want to draw samples from the training data at any point during sample synthesis. This means that we cannot use data samples from our training data to approximately draw samples from $\tilde{q}_\phi(z|x)$.

Instead, similar to Bengio et al. \cite{bengio2013generalized}, we formulate a Markov chain, which we show has the necessary properties to converge and that the chain converges to $\mathcal{P}(z)=\int \tilde{q}_\phi(z|x)p(x)dx$. 
Unlike Bengio's formulation, our chain is initialised with a random vector of the same dimensions as the latent space, rather than a sample drawn from the training set.

We define a Markov chain by the following sampling process:
\begin{equation}
\begin{split}
\label{markovChain}
z^{(0)} \sim \mathbb{R}^a, \hspace{5mm}
x^{(t)} \sim p_\theta(x|z^{(t)}),\\
\tilde{x}^{(t)} \sim c(\tilde{x}|x^{(t)}), \hspace{5mm}
z^{(t+1)} \sim q_\phi(z|\tilde{x}^{(t)}), \\
t \geq 0.
\end{split}
\end{equation}


Notice that our first sample is any real vector of dimension $a$, where $a$ is the dimension of the latent space. This Markov chain has the transition operator:
\begin{equation}
\begin{split}
\label{eqn:T}
T_{\theta,\phi}(z^{(t+1)}|z^{(t)}) =\\
\int q_\phi(z^{(t+1)}|\tilde{x}^{(t)}) c(\tilde{x}^{(t)}|x^{(t)}) p_\theta(x^{(t)}|z^{(t)}) dx d\tilde{x}
\end{split}
\end{equation}

We will now show that under certain conditions this transition operator defines an ergodic Markov chain that converges to $\mathcal{P}(z)=\int \tilde{q}_\phi(z|x) p(x) dx$ in the following steps: 1) We will show that that there exists a stationary distribution $\mathcal{P}(z)$ for $z^{(0)}$ drawn from a specific choice of initial distribution (Lemma \ref{p(x)_main}). 2) The Markov chain is homogeneous, because the transition operator is defined by a set of distributions whose parameters are fixed during sampling. 3) We will show that the Markov chain is also ergodic, (Lemma \ref{ergodic_main}). 4) Since the chain is both homogeneous and ergodic there exists a unique stationary distribution to which the Markov chain will converge \cite{textbook}. 

Step 1) shows that one stationary distribution is $\mathcal{P}(z)$, which we now know by 2) and 3) to be the unique stationary distribution. So the Markov chain converges to $\mathcal{P}(z)$.


In this section, only, we use a change of notation, where the training data probability distribution, previously represented as $p(x)$ is represented as $\mathcal{P}(x)$, this is to help make distinctions between ``natural system'' probability distributions and the learned distributions. Further, note that $p(z)$ is the prior, while the distribution required for sampling $\mathcal{P}(x|z)$ is $\mathcal{P}(z)$ such that:
\begin{equation} \label{assumption1}
\mathcal{P}(x) = \int \mathcal{P}(x|z)\mathcal{P}(z) dz \approx \int p_\theta(x|z)\mathcal{P}(z) dz.
\end{equation}
\begin{equation} \label{pz}
\mathcal{P}(z)=\int \tilde{q}_\phi(z|x) \mathcal{P}(x) dx = \int \int q_\phi(z|\tilde{x})c(\tilde{x}|x) d\tilde{x} \mathcal{P}(x) dx.
\end{equation}




\begin{lemma} \label{p(x)_main} $\mathcal{P}(z)$ is a stationary distribution for the Markov chain defined by the sampling process in (\ref{markovChain}). 
\end{lemma}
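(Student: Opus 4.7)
The plan is a direct fixed-point calculation: assume $z^{(t)} \sim \mathcal{P}(z)$ and show that one application of the transition operator $T_{\theta,\phi}$ returns a variable distributed according to $\mathcal{P}(z)$ as well. Concretely, I would compute the marginal distribution of $z^{(t+1)}$ by
\[
p(z^{(t+1)}) = \int T_{\theta,\phi}(z^{(t+1)}\mid z^{(t)})\,\mathcal{P}(z^{(t)})\,dz^{(t)},
\]
and then substitute the explicit form of $T_{\theta,\phi}$ from (\ref{eqn:T}).

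The next step is to interchange the order of integration so that the innermost integral is over $z^{(t)}$. That inner integral is $\int p_\theta(x^{(t)}\mid z^{(t)})\,\mathcal{P}(z^{(t)})\,dz^{(t)}$, which, by assumption (\ref{assumption1}), is (approximately) equal to $\mathcal{P}(x^{(t)})$. Substituting collapses the expression to
\[
\int\!\!\int q_\phi(z^{(t+1)}\mid \tilde{x}^{(t)})\,c(\tilde{x}^{(t)}\mid x^{(t)})\,\mathcal{P}(x^{(t)})\,d\tilde{x}^{(t)}\,dx^{(t)}.
\]
This is precisely the definition of $\mathcal{P}(z^{(t+1)})$ given in (\ref{pz}) (using the identification $\tilde{q}_\phi(z\mid x) = \int q_\phi(z\mid\tilde x)c(\tilde x\mid x)\,d\tilde x$), so the marginal at step $t+1$ coincides with $\mathcal{P}(z)$, proving stationarity.

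The main subtlety is that (\ref{assumption1}) holds only with an approximation sign: $\mathcal{P}(x) \approx \int p_\theta(x\mid z)\mathcal{P}(z)\,dz$. I would state this caveat explicitly, noting that the lemma is really an ``approximate stationarity'' statement and that it becomes exact in the idealised limit where the learned decoder $p_\theta(x\mid z)$ perfectly models the true generative conditional $\mathcal{P}(x\mid z)$ induced by $\mathcal{P}(z)$. Other than that caveat, the proof is a mechanical unfolding of (\ref{eqn:T}), (\ref{assumption1}) and (\ref{pz}); no measure-theoretic or convergence machinery is needed here, since this lemma only concerns stationarity, with ergodicity and homogeneity deferred to the subsequent lemma.
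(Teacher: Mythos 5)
Your proof is correct and is essentially the same argument as the paper's: the paper writes the identical one-step fixed-point calculation in sampling-process form (start $z^{(0)}\sim\mathcal{P}(z)$, conclude $x^{(0)}\sim\mathcal{P}(x)$ via (\ref{assumption1}), then $z^{(1)}\sim\mathcal{P}(z)$ via (\ref{pz})), whereas you unfold the same marginalisation explicitly through the transition operator (\ref{eqn:T}). Your explicit caveat that (\ref{assumption1}) is only approximate, so stationarity is exact only in the idealised limit $p_\theta(x|z)=\mathcal{P}(x|z)$, is a point the paper treats implicitly by building that assumption into Theorem~\ref{stationary_mainß}.
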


For proof see Appendix.

\begin{lemma} \label{ergodic_main} The Markov chain defined by the transition operator, $T_{\theta,\phi}(z_{t+1}|z_t)$ (\ref{eqn:T}) is ergodic, provided that the corruption process is additive Gaussian noise and that the adversarial pair, $q_\phi(z|\tilde{x})$ and $d_\chi(z)$ are optimal within the adversarial framework.
\end{lemma}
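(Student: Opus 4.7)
The plan is to deduce ergodicity from a single strong structural property of the transition kernel, namely that $T_{\theta,\phi}(z'|z)$ admits a strictly positive density in $z'$ for every $z \in \mathbb{R}^a$. For a homogeneous Markov chain on $\mathbb{R}^a$, such a positivity condition is the standard route to irreducibility (with respect to Lebesgue measure) and to aperiodicity (since $T(z|z)>0$ then forces the period to be $1$). Combined with the existence of the stationary distribution $\mathcal{P}(z)$ already established in Lemma~\ref{p(x)_main}, and with the homogeneity noted in step~(2) of the outline after equation~(\ref{eqn:T}), this yields ergodic convergence to $\mathcal{P}(z)$ by a standard Markov chain convergence theorem. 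So the entire content of Lemma~\ref{ergodic_main} reduces to establishing that positivity condition, using the two hypotheses on the corruption process and on adversarial optimality.

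The main step would then be to inspect the integrand in
\[
T_{\theta,\phi}(z'|z) \;=\; \int\!\!\int q_\phi(z'|\tilde{x})\, c(\tilde{x}|x)\, p_\theta(x|z)\, d\tilde{x}\, dx
\]
and argue each factor is supported broadly enough that the double integral cannot vanish. First, the assumption that $c$ is additive Gaussian noise gives $c(\tilde{x}|x) > 0$ for every $(x,\tilde{x}) \in \mathbb{R}^n \times \mathbb{R}^n$. Second, $p_\theta(x|z)$ is modelled as a smooth reconstruction density (typically Gaussian or Bernoulli with full support on the appropriate domain), so it is strictly positive on a set of $x$ of positive measure for every $z$. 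The critical third factor is $q_\phi(z'|\tilde{x})$: optimality of the adversarial pair $(q_\phi, d_\chi)$ means, by the equilibrium analysis recalled in Section~\ref{adv}, that the marginal $\int q_\phi(z'|\tilde{x})\, \tilde{p}(\tilde{x})\, d\tilde{x}$ equals the prior $p(z')$, which for a full-support prior (e.g.\ the Gaussian used throughout the paper) is strictly positive at every $z' \in \mathbb{R}^a$. Consequently, for each $z'$ the set $\{\tilde{x}: q_\phi(z'|\tilde{x})>0\}$ must have positive Lebesgue measure, and intersecting this with the set where $c(\tilde{x}|x)p_\theta(x|z)>0$ (essentially all of $\mathbb{R}^n\times\mathbb{R}^n$) yields $T_{\theta,\phi}(z'|z)>0$.

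The main obstacle I anticipate is the degeneracy that arises if $q_\phi(z|\tilde{x})$ is taken to be a deterministic encoder, as is common in adversarial autoencoders; then $q_\phi(z|\tilde{x})$ is formally a Dirac mass in $z$ and has no Lebesgue density in its own right. Handling this cleanly requires showing that the smoothing provided by the Gaussian corruption $c(\tilde{x}|x)$ pushes the distribution of $z'=f_\phi(\tilde{x})$ forward to an absolutely continuous distribution: morally a change-of-variables argument along the encoder map, using full-rank Jacobian on a generic open set so that the induced density on $z'$ is strictly positive wherever the prior is. A secondary subtlety is that adversarial optimality is strictly a marginal statement about $\tilde{p}$-averages, not a pointwise one, so the argument must be phrased in terms of sets of positive measure rather than pointwise positivity of $q_\phi(z'|\tilde{x})$ for every $\tilde{x}$. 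Once these technicalities are absorbed into an absolute-continuity / positive-density claim for $T_{\theta,\phi}$, the ergodicity conclusion is immediate.
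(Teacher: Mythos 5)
Your proposal is correct at (or above) the paper's own level of rigor, but it takes a genuinely different route. The paper's proof never argues one-step positivity of the transition kernel: it works entirely at the level of supports, defining $X=\{x:\mathcal{P}(x)>0\}$, $\tilde{X}=\{\tilde{x}:c(\tilde{x}|x)>0\}$ and $Z=\{z:\mathcal{P}(z)>0\}$, using adversarial optimality only to guarantee that every point of the prior's support is hit by \emph{some} $\tilde{x}$, and then obtaining irreducibility by \emph{chaining} overlapping corruption neighbourhoods --- it explicitly assumes the noise is large enough that the (hyper)spheres around nearby data points overlap, so that any $z_j$ can reach any $z_i$ in possibly many steps. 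Aperiodicity is then a separate argument: the corruption either does or does not push $\tilde{x}^{(t)}$ across an encoder decision boundary, giving two positive-probability outcomes, with a contradiction argument ruling out distinct $x_i\neq x_j$ colliding to the same $z$. Your route instead gets irreducibility and aperiodicity simultaneously by exploiting the \emph{full} support of additive Gaussian noise: since $c(\tilde{x}|x)>0$ for every pair, the corruption of even a single reconstruction already reaches every $\tilde{x}$, so no overlap condition, no multi-step chaining, and no separate boundary-crossing argument are needed, and $T_{\theta,\phi}(z|z)>0$ kills periodicity for free. This is cleaner and dispenses with two of the paper's side assumptions (sufficiently large noise so that corruption regions overlap, and the injectivity caveat handled by contradiction at the end of the paper's proof). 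What the paper's support-level argument buys in exchange is precisely immunity to the degeneracy you flag yourself: by never asserting that $q_\phi(z|\tilde{x})$ has a density, it silently accommodates the deterministic encoder $E_\phi$, whereas your kernel-positivity claim genuinely needs the absolute-continuity step (pushforward of the Gaussian through $E_\phi$ with a.e.\ full-rank Jacobian) that you state as an obstacle but leave as an assumption. Two small additional remarks: the implemented decoder is deterministic too, so $p_\theta(x|z)$ is also a Dirac mass --- harmless for your argument, since the kernel then becomes the pushforward of the Gaussian centred at $R_\theta(z)$ through the encoder, but your clause about $p_\theta(x|z)$ being positive on a positive-measure set should be rephrased accordingly; and your marginal-matching deduction holds for a.e.\ $z'$ in the prior's support, which suffices for $\mathcal{P}$-irreducibility and, with the stationarity from Lemma~\ref{p(x)_main}, for the convergence claimed in Theorem~1.
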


For proof see Appendix.

\begin{theorem} \label{stationary_mainß} Assuming that $p_\theta(x|z)$ is approximately equal to $\mathcal{P}(x|z)$, and that the adversarial pair -- $q_\phi(z|x)$ and $d_\chi(z)$ -- are optimal, the transition operator $T_{\theta,\phi}(z^{(t+1)}|z^{(t)})$ defines a Markov chain whose unique stationary distribution is $\mathcal{P}(z)=\int \tilde{q}_\phi(z|x)\mathcal{P}(x)dx$.
\end{theorem}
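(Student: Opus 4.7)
The plan is to follow exactly the four-step outline given immediately before the theorem statement, tying together Lemma~\ref{p(x)_main}, Lemma~\ref{ergodic_main}, and the standard convergence theorem for ergodic homogeneous Markov chains on continuous state space. Since both lemmas are stated as prerequisites (with proofs deferred to the appendix), the theorem itself should read as a short assembly argument rather than a new calculation.

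First I would invoke Lemma~\ref{p(x)_main} to conclude that $\mathcal{P}(z) = \int \tilde{q}_\phi(z|x)\mathcal{P}(x)\,dx$ is a stationary distribution of the transition operator $T_{\theta,\phi}$ defined in~(\ref{eqn:T}). This gives existence of at least one stationary distribution with the desired form. Next I would observe that the chain is homogeneous: the transition operator is built out of the three conditional densities $q_\phi(z|\tilde{x})$, $c(\tilde{x}|x)$, and $p_\theta(x|z)$, whose parameters $\phi$, $\theta$ (and the fixed corruption process) do not change across time steps of the chain during sampling. I would state this explicitly as a one-sentence justification rather than as a separate lemma.

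Third, I would appeal to Lemma~\ref{ergodic_main}, whose hypotheses match the assumptions of the theorem --- additive Gaussian corruption and optimality of the adversarial pair $(q_\phi, d_\chi)$ --- to conclude that the chain is ergodic. Together, homogeneity plus ergodicity imply (by the standard theorem for Markov chains on general state spaces, cited as~\cite{textbook}) the existence of a \emph{unique} stationary distribution to which the chain converges from any initial state. Combining uniqueness with Step~1, the unique stationary distribution must coincide with the $\mathcal{P}(z)$ exhibited there, which completes the argument.

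The assembly itself is essentially bookkeeping; the real content of the theorem lives in the two lemmas. The main obstacle --- pushed into the appendix --- is the proof of Lemma~\ref{ergodic_main}, since ergodicity on a continuous latent space typically requires checking both irreducibility (that every open set of positive measure under $\mathcal{P}(z)$ is reachable in finitely many steps, which relies on Gaussian corruption giving everywhere-positive transition kernels) and aperiodicity (which again follows from the smoothing effect of the Gaussian noise combined with the fact that the adversarially optimal encoder makes $q_\phi(z|\tilde{x})$ consistent with a prior of full support). The assumption that $p_\theta(x|z) \approx \mathcal{P}(x|z)$ is what makes Step~1 identify the stationary distribution with the natural-system $\mathcal{P}(z)$ from~(\ref{pz}) rather than just some fixed point of the learned dynamics; I would flag this approximation explicitly in the concluding sentence of the proof, since it is the only place in the theorem where ``approximately equal'' is used substantively.
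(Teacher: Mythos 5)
Your proposal is correct and follows essentially the same route as the paper: the paper's own proof is a one-line assembly citing Lemmas~\ref{p(x)_main} and~\ref{ergodic_main}, with the homogeneity observation and the uniqueness-from-ergodicity argument (via the standard result cited as~\cite{textbook}) laid out in the four-step outline immediately preceding the theorem, exactly as you reconstruct it. Your additional remarks on where the hard work lives (the ergodicity lemma) and where the approximation $p_\theta(x|z)\approx\mathcal{P}(x|z)$ enters are accurate glosses on the paper's structure rather than deviations from it.
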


\begin{proof}
This follows from Lemmas \ref{p(x)_main} and \ref{ergodic_main}.
\end{proof}

This sampling method uncovers the distribution $\mathcal{P}(z)$ on which samples drawn from $p_\theta(x|z)$ must be conditioned in order to sample $p_\theta(x)$. Assuming $p_\theta(x|z) \approx \mathcal{P}(x|z)$, this allows us to draw samples from $\mathcal{P}(x)$.

For completeness, we would like to acknowledge that there are several other methods that use Markov chains during the training of autoencoders \cite{bachman2015variational, nguyen2016plug} to improve performance. Our approach for synthesising samples using the DAAE is focused on sampling only from trained models; the Markov chain sampling is not used to update model parameters.


\section{Implementation}
The analyses of Sections \ref{DAA} and \ref{synthesis} are deliberately general: they do not rely on any specific implementation choice to capture the model distributions.  In this section, we consider a specific implementation of denoising adversarial autoencoders and apply them to the task of learning models for image distributions. We define an encoding model that maps corrupted data samples to a latent space $E_\phi(\tilde{x})$, and $R_\theta(z)$ which maps samples from a latent space to an image space. These respectively draw samples according to the conditional probabilities $q_\phi(z|\tilde{x})$ and $p_\theta(x|z)$. We also define a corruption process, $C(x)$, which draws samples according to $c(\tilde{x}|x)$.


The parameters $\theta$ and $\phi$ of models $R_\theta(z)$ and $E_\phi(z)$ are learned under an autoencoder framework; the parameters $\phi$ are also updated under an adversarial framework. The models are trained using large datasets of unlabelled images.

\subsection{The Autoencoder}
Under the autoencoder framework, $E_\phi(x)$ is the encoder and $R_\theta(z)$ is the decoder. We used fully connected neural networks for both the encoder and decoder. Rectifying Linear Units (ReLU) were used between all intermediate layers to encourage the networks to learn representations that capture multi-modal distributions. In the final layer of the decoder network, a sigmoid activation function is used so that the output represents pixels of an image. The final layer of the encoder network is left as a linear layer, so that the distribution of encoded samples is not restricted.

As described in Section \ref{construction}, the autoencoder is trained to maximise the log-likelihood of the reconstructed image given the corrupted image. Although there are several ways in which one may evaluate this log-likelihood, we chose to measure pixel wise binary cross-entropy between the reconstructed sample, $\hat{x}$ and the original samples before corruption, $x$. During training we aim to learn parameters $\phi$ and $\theta$ that minimise the binary cross-entropy between $\hat{x}$ and $x$. 
The training process is summarised by lines $1$ to $9$ in Algorithm \ref{algv1} in the Appendix.

The vectors output by the encoder may take any real values, therefore minimising reconstruction error is not sufficient to match either $q_\phi(z|\tilde{x})$ or $\tilde{q}_\phi(z|x)$ to the prior, $p(z)$. For this, parameters $\phi$ must also be updated under the adversarial framework.

\subsection{Adversarial Training}
To perform adversarial training we define the discriminator $d_\chi(z)$, described in Section \ref{adv} to be a fully connected neural network, which we denote $D_\chi(z)$. The output of $D_\chi(z)$ is a ``probability'' because the final layer of the neural network has a sigmoid activation function, constraining the range of $D_\chi(z)$ to be between $(0,1)$. Intermediate layers of the network have ReLU activation functions to encourage the network to capture highly non-linear relations between $z$ and the labels, \{`real', `fake'\}.

How adversarial training is applied depends on whether $\tilde{q}_\phi(z|x)$ or $q_\phi(z|\tilde{x})$ is being fit to the prior $p(z)$. $z_{fake}$ refers to the samples drawn from the distribution that we wish to fit to $p(z)$ and $z_{real}$, samples drawn from the prior, $p(z)$. The discriminator, $D_\chi(z)$, is trained to predict whether $z$'s are `real' or `fake'. This may be achieved by learning parameters $\chi$ that maximise the probability of the correct labels being assigned to $z_{fake}$ and $z_{real}$. This training procedure is shown in Algorithm \ref{algv1} on Lines $14 $ to $16$.

Drawing samples, $z_{real}$, involves sampling some prior distribution, $p(z)$, often a Gaussian. Now, we consider how to draw fake samples, $z_{fake}$. How these samples are drawn depends on whether $q_\phi(z|\tilde{x})$ (DAAE) is being fit to the prior or $\tilde{q}_\phi(z|x)$ (iDAAE) is being fit to the prior. Drawing samples, $z_{fake}$ is easy if $q_\phi(z|\tilde{x})$ is being matched to the prior, as these are simply obtained by mapping corrupted samples though the encoder: $z_{fake} = E_\phi(\tilde{x})$.

However, if $\tilde{q}(z|x)$ is being matched to the prior, we must use Monte Carlo sampling to approximate $z_{fake}$ samples (see Section \ref{v1}). The process for calculating $z_{fake}$ is given by Algorithm \ref{algz} in the Appendix, and detailed in Section \ref{v1}.

Finally, in order to match the distribution of $z_{fake}$ samples to the prior, $p(z)$, adversarial training is used to update parameters $\phi$ while holding parameters $\chi$ fixed. Parameters $\phi$ are updated to minimise the likelihood that $D_\chi(\cdot)$ correctly classifies $z_{fake}$ as being `fake'. The training procedure is laid out in lines $18$ and $19$ of Algorithm \ref{algv1}.

Algorithm \ref{algv1} shows the steps taken to train an iDAAE. To train a DAAE
instead, all lines in Algorithm \ref{algv1} are the same except Line $11$, which may be replaced by $z_{fake} = E_\phi(\tilde{x})$.

\subsection{Sampling}
Although the training process for matching $\tilde{q}_\phi(z|x)$ to $p(z)$ is less computationally efficient than matching $q_\phi(z|\tilde{x})$ to $p(z)$, it is very easy to draw samples when $\tilde{q}_\phi(z|x)$ is matched to the prior (iDAAE). We simply draw a random $z^{(0)}$ value from $p(z)$, and calculate $x^{(0)}=R_\theta(z^{(0)})$, where $x^{(0)}$ is a new sample. When drawing samples, parameters $\theta$ and $\phi$ are fixed.

If $q_\phi(z|\tilde{x})$ is matched to the prior (DAAE), an iterative sampling process is needed in order to draw new samples from $p(x)$. This sampling process is described in Section \ref{sampling}. To implement this sampling process is trivial. A random sample, $z^{(0)}$ is drawn from any distribution; the distribution does not have to be the chosen prior, $p(z)$. New samples, $z^{(t)}$ are obtained by iteratively decoding, corrupting and encoding $z^{(t)}$, such that $z^{(t+1)}$ is given by:
\[ z^{(t+1)} = E_\phi(C(R_\theta(z^{(t)}))). \]

In the following section, we evaluate the performance of denoising adversarial autoencoders on three image datasets, a handwritten digit dataset (MNIST) \cite{lecun1998mnist}, a synthetic colour image dataset of tiny images (Sprites) \cite{reed2015deep}, and a complex dataset of hand-written characters \cite{lake2015human}. The denoising and non-denoising adversarial autoencoders (AAEs) are compared for tasks including reconstruction, generation and classification. 

\section{Experiments \& Results}

\subsection{Code Available Online}
We make our PyTorch \cite{paszke2017automatic} code available at the following link: \url{https://github.com/ToniCreswell/pyTorch_DAAE}  \footnote{An older version of our code in Theano available at \url{https://github.com/ToniCreswell/DAAE_} with our results presented in iPython notebooks. Since this is a revised version of our paper and Theano is no longer being supported, our new experiments on the CelebA datasets were performed using PyTorch.}.


\subsection{Datasets}
We evaluate our denoising adversarial autoencoder on three image datasets of varying complexity. Here, we describe the datasets and their complexity in terms of variation within the dataset, number of training examples and size of the images.

\subsubsection{Datasets: Omniglot}
\label{sec:data:omniglot}
The Omniglot dataset is a handwritten character dataset consisting of $1623$ categories of character from $50$ different writing systems, with only $20$ examples of each character. Each example in the dataset is $105$-by-$105$ pixels, taking values \{0,1\}. The dataset is split such that $19$ examples from $964$ categories make up the training dataset, while one example from each of those $964$ categories makes up the testing dataset. The $20$ characters from each of the remaining $659$ categories make up the evaluation dataset. This means that experiments may be performed to reconstruct or classify samples from categories not seen during training of the autoencoders.
\subsubsection{Datasets: Sprites} The sprites dataset is made up of $672$ unique human-like characters. Each character has $7$ attributes including hair, body, armour, trousers, arm and weapon type, as well as gender. For each character there $20$ animations consisting of $6$ to $13$ frames each. There are between $120$ and $260$ examples of each character, however every example is in a different pose. Each sample is $60$-by-$60$ pixels and samples are in colour.
The training, validation and test datasets are split by character to have $500$, $72$ and $100$ unique characters each, with no two sets having the same character.

\subsubsection{Datasets: CelebA} The CelebA dataset consists of $250$k images of faces in colour. Though a version of the dataset with tightly cropped faces exists, we use the un-cropped dataset. We use $1000$ samples for testing and the rest for training. Each example has dimensions $64$-by-$64$ and a set of labelled facial attributes for example, `No Beard', `Blond Hair', `Wavy Hair' etc. . This face dataset is more complex than the Toronto Face Dataset used by Makhzani et al. \cite{makhzani2015adversarial} for training the AAE.  

\subsection{Architecture and Training}
For each dataset, we detail the architecture and training parameters of the networks used to train each of the denoising adversarial autoencoders. For each dataset, several DAAEs, iDAAEs and AAEs are trained. In order to compare models trained on the same datasets, the same network architectures, batch size, learning rate, annealing rate and size of latent code is used for each.
Each set of models were trained using the same optimization algorithm. The trained AAE \cite{makhzani2015adversarial} models act as a benchmark, allowing us to compare our proposed DAAEs and iDAAEs.

\subsubsection{Architecture and Training: Omniglot}
The decoder, encoder and discriminator networks consisted of $6$, $3$ and $2$ fully connected layers respectively, each layer having $1000$ neurons. We found that deeper networks than those proposed by Makhazni et al. \cite{makhzani2015adversarial} (for the MNIST dataset) led to better convergence. The networks are trained for $1000$ epochs, using a learning rate $10^{-5}$, a batch size of 64 and the Adam \cite{kingma2014adam} optimization algorithm. We used a $200$D Gaussian for the prior and additive Gaussian noise with standard deviation $0.5$ for the corruption process. When training the iDAAE, we use $M=5$ steps of Monte Carlo integration (see Algorithm \ref{algz} in the Appendix).

\subsubsection{Architecture and Training: Sprites}
Both the encoder and discriminator are $2$-layer fully connected neural networks with $1000$ neurons in each layer. For the decoder, we used a $3$-layer fully connected network with $1000$ neurons in the first layer and $500$ in each of the last layers, this configuration allowed us to capture complexity in the data without over fitting. The networks were trained for $5$ epochs, using a batch size of $128$, a learning rate of $10^{-4}$ and the Adam \cite{kingma2014adam} optimization algorithm. We used an encoding $200$ units, $200$D Gaussian for the prior and additive Gaussian noise with standard deviation $0.25$ for the corruption process. The iDAAE was trained with $M=5$ steps of Monte Carlo integration.

\subsubsection{Architecture and Training: CelebA}
The encoder and decoder were constructed with convolutional layers, rather than fully connected layers since the CelebA dataset is more complex than the Toronto face dataset use by Makhzani et al. \cite{makhzani2015adversarial}. The encoder and decoder consisted of $4$ convolutional layers with a similar structure to that of the DCGAN proposed by Radford et al. \cite{radford2015unsupervised}. We used a $3$-layer fully connected network for the discriminator. Networks were trained for $100$ epochs with a batch size of $64$ using RMSprop with learning rate $10^{-4}$ and momentum of $\rho=0.1$ for training the discriminator. We found that using smaller momentum values lead to more blurred images, however larger momentum values prevented the network from converging and made training unstable. When using Adam instead of RMSprop (on the CelebA dataset specifically) we found that the values in the encodings became very large, and were not consistent with the prior. The encoding was made up of $200$ units and we used a $200$D Gaussian for the prior. We used additive Gaussian noise for the corruption process. We experimented with different noise level, $\sigma$ between $[0.1, 1.0]$, we found several values in this range to be suitable. For our classification experiments we fixed $\sigma=0.25$ and for synthesis from the DAAE, to demonstrate the effect of sampling, we used $\sigma=1.0$. For the iDAAE we experimented with $M=2, 5, 20, 50$. We found that $M < 5$ (when $\sigma=1.0$), was not sufficient to train an iDAAE. By comparing histograms of encoded data samples to histograms of the prior (see Figure \ref{faces:hists}), for an iDAAE trained with a particular $M$ value, we are able to see whether $M$ is sufficiently larger or not. We found $M=5$ to be sufficiently large for most experiments.


\subsection{Sampling DAAEs and iDAAEs}



Samples may be synthesized using the decoder of a trained iDAAE or AAE by passing latent samples drawn from the prior through the decoder. On the other hand, if we pass samples from the prior through the decoder of a trained DAAE, the samples are likely to be inconsistent with the training data. To synthesize more consistent samples using the DAAE, we draw an initial $z^{(0)}$ from any random distribution -- we use a Gaussian distribution for simplicity\footnote{which happens to be equivalent to our choice of prior} -- and decode, corrupt and encode the sample several times for each synthesized sample. This process is equivalent to sampling a Markov chain where one iteration of the Markov chain includes decoding, corrupting and encoding to get a $z^{(t)}$ after $t$ iterations.  The $z^{(t)}$ may be used to synthesize a novel sample which we call, $x^{(t)}$. $x^{(0)}$ is the sample generated when $z^{(0)}$ is passed through the decoder.

To evaluate the quality of some synthesized samples, we calculated the log-likelihood of real samples under the model \cite{makhzani2015adversarial}.  This is achieved by fitting a Parzen window to a number of synthesised samples. Further details of how the log-likelihood is calculated for each dataset is given in the Appendix \ref{sec:ll}. 


We expect initial samples, $x^{(0)}$'s drawn from the DAAE to have a lower (worse) log-likelihood than those drawn from the AAE, however we expect Markov chain (MC) sampling to improve synthesized samples, such that $x^{(t)}$ for $t > 0$ should have larger log-likelihood than the initial samples. It is not clear whether $x^{(t)}$ for $t > 0$ drawn using a DAAE will be better than samples drawn form an iDAAE. The purpose of these experiments is to demonstrate the challenges associated with drawing samples from denoising adversarial autoencoders, and show that our proposed methods for sampling a DAAE and training iDAAEs allows us to address these challenges. We also hope to show that iDAAE and DAAE samples are competitive with those drawn from an AAE.




\subsubsection{Sampling: Omniglot}
Here, we explore the Omniglot dataset, where we look at log-likelihood score on both a testing and evaluation dataset. Recall (Section \ref{sec:data:omniglot}) that the testing dataset has samples from the same classes as the training dataset and the evaluation dataset has samples from different classes.



First, we discuss the results on the evaluation dataset. The results, shown in Figure \ref{omni_ll}, are consistent with what is expected of the models. The iDAAE out-performed the AAE, with a less negative (better) log-likelihood. The initial samples drawn using the DAAE had more negative (worse) log-likelihood values than samples drawn using the AAE. However, after one iteration of MC sampling, the synthesized samples have less negative (better) log-likelihood values than those from the AAE. Additional iterations of MC sampling led to worse results, possibly because synthesized samples tending towards multiple modes of the data generating distribution, appearing to be more like samples from classes represented in the training data.

The Omniglot testing dataset consists of one example of every category in the training dataset. This means that if multiple iterations of MC sampling cause synthesized samples to tend towards modes in the training data, the likelihood score on the testing dataset is likely to increase. The results shown in Figure \ref{omni_ll} confirm this expectation; the log-likelihood for the $5^{th}$ sample is less negative (better) than for the $1^{st}$ sample. These apparently conflicting results (in Figure \ref{omni_ll}) -- whether sampling improves or worsens synthesized samples -- highlights the challenges involved with evaluating generative models using the log-likelihood, discussed in more depth by Theis et al. \cite{theis2015note}. For this reason, we also show qualitative results.


Figure \ref{omni_chain_sample}(a) shows an set of initial samples ($x^{(0)}$) drawn from a DAAE and samples synthesised after $9$ iterations ($x^{(9)}$) of MC sampling in Figure \ref{omni_chain_sample}(b), these samples appear to be well vaired, capturing multiple modes of the data generating distribution.  

\begin{figure}
    \centering
    \includegraphics[width=0.9\columnwidth]{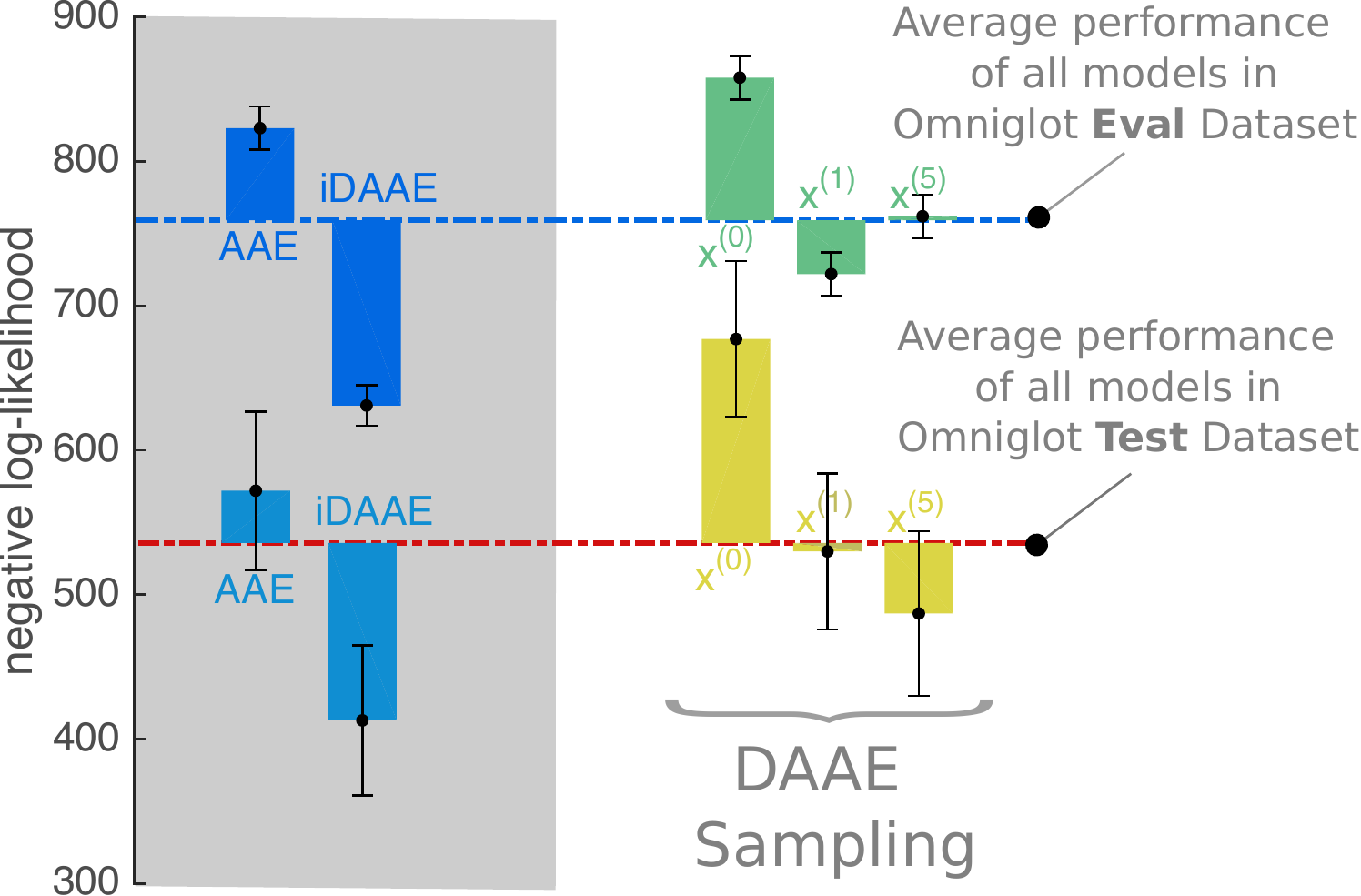}
    \caption{\textbf{Omniglot log-likelihood of $p_\theta(x)$ compared on the testing and evaluation datasets.} The training and evaluation datasets have samples from \textbf{different} handwritten character classes. All models were trained using a 200D Gaussian prior. The training and testing datasets have samples from the \textbf{same} handwritten character classes.}
    \label{omni_ll}
\end{figure}

\begin{figure}
\centering
\begin{subfigure}{0.45\columnwidth}
\includegraphics[width=0.9\columnwidth]{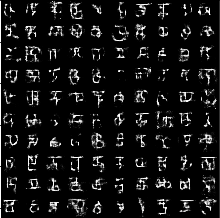}
\caption{$x^{(0)}$}
\end{subfigure}
\begin{subfigure}{0.45\columnwidth}
\includegraphics[width=0.9\columnwidth]{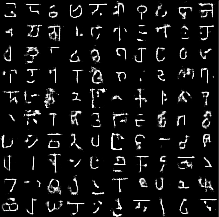}
\caption{$x^{(9)}$}
\end{subfigure}
\caption{\textbf{Omniglot Markov chain (MC) sampling:} (a) Initial sample, $x^{(0)}$ and (b) \textit{Corresponding} samples, $x^{(9)}$ after $9$ iterations of MC sampling. The chain was initialized with $z^{(0)} \sim \mathcal{N}(0,I)$.}
\label{omni_chain_sample}
\end{figure}

\subsubsection{Sampling: Sprites}
\label{sampling_sprites}

In alignment with expectation, the iDAAE model synthesizes samples with higher (better) log-likelihood, $\mathbf{2122 \pm 5}$, than the AAE, $\mathbf{2085 \pm 5}$. The initial image samples drawn from the DAAE model under-perform compared to the AAE model, $\mathbf{2056\pm 5}$, however after just one iteration of sampling the synthesized samples have higher log-likelihood than samples from the AAE. Results also show that synthesized samples drawn using the DAAE after one iteration of MC sampling have higher likelihood, $\mathbf{2261 \pm 5}$, than samples drawn using either the the iDAAE or AAE models.

When more than one step of MC sampling is applied, the log-likelihood decreases, in a similar way to results on the Omniglot evalutation dataset, this may be related to how the training, and test data are split, each dataset has a unique set of characters, so combinations seen during training will not be present in the testing dataset. These results further suggest that MC sampling pushes synthesized samples towards modes in the training data.


\subsubsection{Sampling: CelebA} 

In Figure \ref{iDAAE:faces:sampling} we compare samples synthesized using an AAE to those synthesized using an iDAAE trained using $M=5$ intergration steps. Figure \ref{iDAAE:faces:sampling}(b) show samples drawn from the iDAAE which improve upon those drawn from the AAE model. 

 In Figure \ref{DAAE:faces:sampling} we show samples synthesized from a DAAE using the iterative approach described in section \ref{sampling} for $M=\{0,5,20\}$. We see that the initial samples, $x^{(0)}$ have blurry artifacts, while the final samples, $x^{(20)}$ are more sharp and free from the blurry artifacts.

\begin{figure}[h]
\begin{subfigure}{0.48\columnwidth}
\includegraphics[width=0.9\columnwidth]{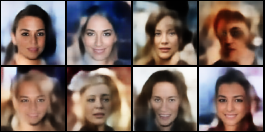} 
\caption{AAE (Previous work)}
\end{subfigure}
\begin{subfigure}{0.48\columnwidth}
\includegraphics[width=0.9\columnwidth]{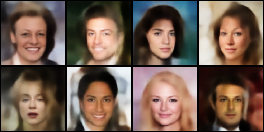}
\caption{iDAAE  (Our work)}
\end{subfigure}
\caption{\textbf{CelebA iDAAE samples} (a) AAE (no noise) with $\rho=0.1$ and (c) iDAAE (with noise) with $\rho=0.1$, $\sigma=0.25$ and $M=5$. }
\label{iDAAE:faces:sampling}
\end{figure}

\begin{figure}
\centering
\begin{subfigure}{0.3\columnwidth}
\includegraphics[width=0.8\columnwidth]{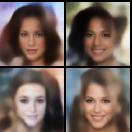}
\caption{$x^{(0)}$}
\end{subfigure}
\begin{subfigure}{0.3\columnwidth}
\includegraphics[width=0.8\columnwidth]{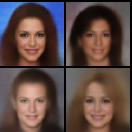}
\caption{$x^{(5)}$}
\end{subfigure}
\begin{subfigure}{0.3\columnwidth}
\includegraphics[width=0.8\columnwidth]{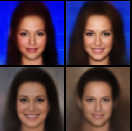}
\caption{$x^{(20)}$}
\end{subfigure}
\caption{\textbf{DAAE Face Samples} $\sigma=1.0$}
\label{DAAE:faces:sampling}
\end{figure}

When drawing samples from iDAAE and DAAE models trained on CelebA, a critical difference between the two models emerges: samples synthesized using a DAAE have good structure but appear to be quite similar to each other, while the iDAAE samples have less good structure but appear to have lots of variation. The lack of variation in DAAE samples may be related to the sampling procedure, which accoriding to theory presented by Alain et al. \cite{alain2014regularized},  would be similar to taking steps toward the highest density regions of the distribution (i.e. the mode), explaining why samples appear to be quite similar.

When comparing DAAE or iDAAE samples to samples from other generative models such as GANs \cite{goodfellow2014generative} we may notice that samples are less sharp. However, GANs often suffer from `mode collapse' this is where all synthesised samples are very similar, the iDAAE does not suffer mode collapse and does not require any additional procedures to prevent mode collapse \cite{salimans2016improved}. Further, (vanilla) GANs do not offer an encoding model. Other GAN variants such as Bi-GAN \cite{donahue2016adversarial} and ALI \cite{dumoulin2016adversarially} do offer encoding models, however the fidelity of reconstruction is very poor. The AAE, DAAE and iDAAE models are able to reconstruct samples faithfully. We will explore fidelity of reconstruction in the next section and compare to a state-of-art ALI that has been modified to have improved reconstruction fidelity, ALICE \cite{li2017alice}.



We conclude this section on sampling, by making the following observations; samples synthesised using iDAAEs out-performed AAEs on all datasets, where $M = 5$. It is convenient, that relatively small $M$ yields improvement, as the time needed to train an iDAAE may increase linearly with $M$. We also observed that initial samples synthesized using the DAAE are poor and in all cases even just one iteration of MC sampling improves image synthesis. 

Finally, evaluating generated samples is challenging: log-likelihood is not always reliable \cite{theis2015note}, and qualitative analysis is subjective. For this reason, we provided both quantitative and qualitative results to communicate the benefits of introducing MC sampling for a trained DAAE, and the advantages of iDAAEs over AAEs. 

\subsection{Reconstruction}
The reconstruction task involves passing samples from the test dataset through the trained encoder and decoder to recover a sample similar to the original (uncorrupted) sample. The reconstruction is evaluated by computing the mean squared error between the reconstruction and the original sample.

We are interested in reconstruction for several reasons. The first is that if we wish to use encodings for down stream tasks, for example classification, a good indication of whether the encoding is modeling the sample well is to check the reconstructions. For example if the reconstructed image is missing certain features that were present in the original images, it may be that this information is not preserved in the encoding. The second reason is that checking sample reconstructions is also a method to evaluate whether the model has overfit to test samples. The ability to reconstruct samples not seen during training suggests that a model has not overfit. The final reason, is to further motivate AAE, DAAE and iDAAE models as alternatives to GAN based models that are augmented with encoders \cite{li2017alice}, for down stream tasks that require good sample reconstruction.
We expect that adding noise during training would both prevent over fitting and encourage the model to learn more robust representations, therefore we expect that the DAAE and iDAAE would outperform the AAE. 

\subsubsection{Reconstruction: Omniglot}
Table \ref{table:rec} compares reconstruction errors of the AAE, DAAE and iDAAE trained on the Omniglot dataset. The reconstruction errors for both the iDAAE and the DAAE are less than the AAE. The results suggest that using the denoising criterion during training helps the network learn more robust features compared to the non-denoising variant. The smallest reconstruction error was achieved by the DAAE rather than the iDAAE; qualitatively, the reconstructions using the DAAE captured small details while the iDAAE lost some. This is likely to be related to the multimodal nature of $\tilde{q}_\phi(z|x)$ in the DAAE compared to the unimodal nature of $\tilde{q}_\phi(z|x)$ in an iDAAE.


\subsubsection{Reconstruction: Sprites}
Table \ref{table:rec} shows reconstruction error on samples from the sprite test dataset for models trained on the sprite training data. In this case only the iDAAE model out-performed the AAE and the DAAE performed as well as the AAE. 


\subsubsection{Reconstruction: CelebA}

Table \ref{table:rec} shows reconstruction error on the CelebA dataset. We compare AAE, DAAE and iDAAE models trained with momentum, $\rho=0.1$, where the DAAE and iDAAE have corruption $\sigma=0.1$ and the iDAAE is trained with $M=10$ integration steps. We also experimented with $M=5$ however, better results were obtained using $M=10$. While the DAAE performs similarly well to the AAE, the iDAAE outperforms both. Figure \ref{face:rec} shows examples of reconstructions obtained using the iDAAE. Although the reconstructions are slightly blurred, the reconstructions are highly faithful, suggesting that facial attributes are correctly encoded by the iDAAE model.

\begin{figure}
\centering
\begin{subfigure}{0.45\columnwidth}
\includegraphics[width=0.9\columnwidth]{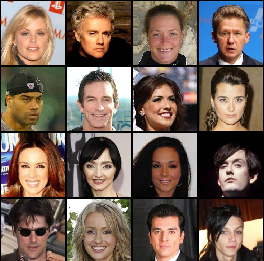} %
\caption{Original}
\end{subfigure}
\begin{subfigure}{0.45\columnwidth}
\includegraphics[width=0.9\columnwidth]{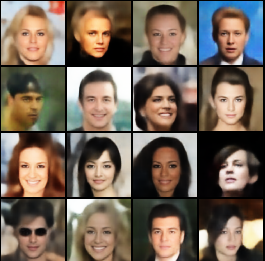} 
\caption{Reconstructions}
\end{subfigure}
\caption{ \textbf{CelebA Reconstruction Error with an iDAAE}}
\label{face:rec}
\end{figure}



\begin{table} 
\centering
\caption{\textbf{Reconstruction:} Shows the mean squared error for reconstructions of corrupted test data samples. This table server two purposes: (1) To demonstrate that in most cases the DAAE and iDAAE are better able to reconstruct images compared to the AAE. (2) To motivate why we are interested in AAEs, as opposed to other GAN \cite{goodfellow2014generative} related approaches, by comparing reconstruction error on MNIST for a state of the art GAN variant the ALICE \cite{li2017alice}, which was designed to improve reconstruction fidelity in GAN-like models.}
\begin{tabular}{lcccc}\toprule
    Model &     Omniglot &        Sprite &          MNIST\footnote{The MNIST dataset is described in the Appendix.}  &              CelebA\\ \midrule
    AAE &       0.047 &           0.019 &           0.017 &      0.500\\
    DAAE &      \textbf{0.029} &  0.019 &  \textbf{0.015} &               0.501 \\
    iDAAE &     0.031 &           \textbf{0.018} &           0.018 &               \textbf{0.495}\\
    ALICE {\cite{li2017alice}} & - & - &         0.080& \\
    \bottomrule
\end{tabular}
\label{table:rec}
\end{table}










\subsection{Classification}

We are motivated to understand the properties of the representations (latent encoding) learned by the DAAE and iDAAE trained on unlabeled data. A particular property of interest is the separability, in latent space, between objects of different classes. To evaluate separability, rather than training in a semi-supervised fashion \cite{makhzani2015adversarial} we obtain class predictions by training an SVM on top of the representations, in a similar fashion to that of Kumar et al. \cite{kumar2017variational}.

\subsubsection{Classification: Omniglot}
Classifying samples in the Omniglot dataset is very challenging: the training and testing datasets consists of $946$ classes, with only $19$ examples of each class in the training dataset. The $946$ classes make up $30$ writing systems, where symbols between writing systems may be visually indistinguishable. Previous work has focused on only classifying $5$, $15$ or $20$ classes from within a single writing system \cite{santoro2016one,vinyals2016matching,edwards2016towards,lake2015human}, however we attempt to perform classification across all $946$ classes. The Omniglot training dataset is used to train SVMs (with RBF kernels) on encodings extracted from encoding models of the trained DAAE, iDAAE and AAE models. Classification scores are reported on the Omniglot evaluation dataset, (Table \ref{omni_class}).


Results show that the DAAE and iDAAE out-perform the AAE on the classification task. The DAAE and iDAAE also out-perform a classifier trained on encodings obtained by applying PCA to the image samples, while the AAE does not, further showing the benefits of using denoising.




We perform a separate classification task using only $20$ classes from the Omniglot evaluation dataset (each class has $20$ examples). This second test is performed for two key reasons: a) to study how well autoencoders trained on only a subset of classes can generalise as feature extractors for classifiers of classes not seen during {\em autoencoder} training; b) to facilitate performance comparisons with previous work \cite{edwards2016towards}. A linear SVM classifier is trained on the $19$ samples from each of the $20$ classes in the evaluation dataset and tested on the remaining $1$ sample from each class. We perform the classification $20$ times, leaving out a different sample from each class, in each experiment. The results are shown in Table \ref{omni_class}. For comparison, we also show classification scores when PCA is used as a feature extractor instead of a learned encoder.



Results show that the the DAAE model out-performs the AAE model, while the iDAAE performs less well, suggesting that features learned by the DAAE transfer better to new tasks, than those learned by the iDAAE. The AAE, iDAAE and DAAE models also out-perform PCA.

\begin{table}
\centering
\caption{\textbf{Omniglot Classification on all $964$ Test Set Classes and On $20$ Evaluation Classes.} }
\begin{tabular}{lcccc}\toprule
    
    Model & Test Acc. & Eval Acc. \% \\ \midrule
    AAE & 18.36\% & 78.75 \%\\
    DAAE & 31.74\% & \textbf{83.00}\%\\
    iDAAE & \textbf{34.02}\% & 78.25\%\\ \hdashline
    PCA & 31.02\% & 76.75\%\\ \hdashline
    Random chance & 0.11\% & 5\%\\
    \bottomrule
\end{tabular}
\label{omni_class}
\end{table}


\subsubsection{Classification: CelebA} 
\label{sec:class:celebA}


We perform a more extensive set of experiments to evaluate the \textbf{linear} separability of encodings learned on the celebA dataset and compare to state of the art methods including the VAE \cite{kingma2013auto} and the $\beta$-VAE\footnote{The $\beta$-VAE \cite{higgins2016beta} weights the $KL$ term in the VAE cost function with $\beta>1$ to encourge better organisation of the latent sapce, factorising the latent encoding into interpretable, indepenant compoenents}. \cite{higgins2016beta}.

We train a linear SVM on the encodings of a DAAE (or iDAAE) to predict labels for facial attributes, for example `Blond Hair', `No Beard' etc. . In our experiments, we compare classification accuracy on $12$ attributes obtained using the AAE, DAAE and iDAAE compared to previously reported results obtained for the VAE \cite{kingma2013auto} and the $\beta$-VAE \cite{higgins2016beta}, these are shown in Figure \ref{face:benchmark}. The results for the VAE and $\beta$-VAE were obtained using a similar approach to ours and were reported by Kumar et al. \cite{kumar2017variational}. We used the same hyper parameters to train all models and a fixed noise level of $\sigma=0.25$, the iDAAE was trained with $M=10$. The Figure (\ref{face:benchmark}) shows that the AAE, iDAAE and DAAE models outperform the VAE and $\beta$-VAE models on most facial attribute categories.


\begin{figure}
\centering
\includegraphics[width=1.0\columnwidth]{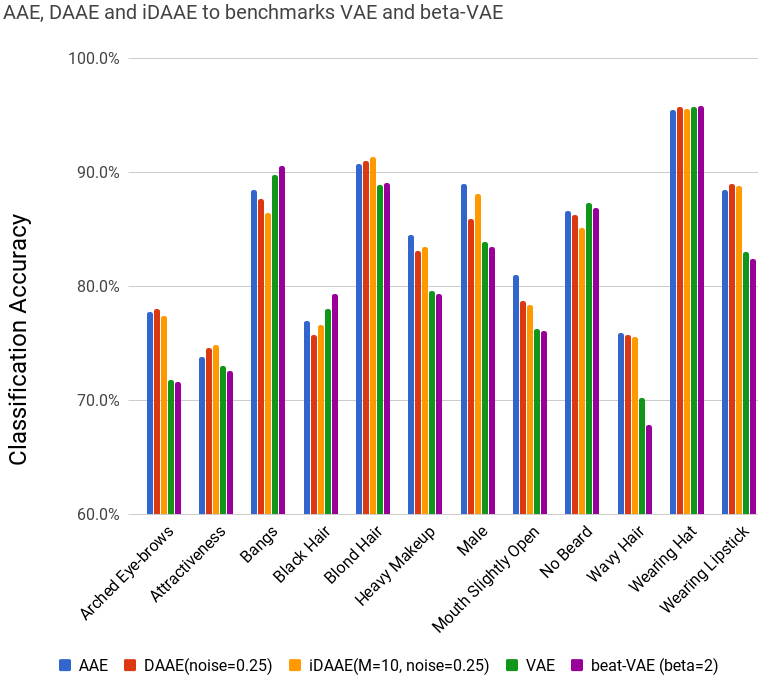} %
\caption{\textbf{Facial Attribute Classification} Comparison of classification scores for an AAE, DAAE, iDAAE compared to the VAE \cite{kingma2013auto} and $\beta$-VAE \cite{higgins2016beta}. A Linear SVM classifier is trained on encodings to demonstrate the linear separability of representation learned by each model. The attribute classification values for the VAE and $\beta$-VAE were obtained from Kumar et al. \cite{kumar2017variational}}
\label{face:benchmark}
\end{figure}

To compare models more easily, we ask the question, `On how many facial attributes does one model out perform another?', in the context of facial attribute classification. We ask this question for various combinations of model pairs, the results are shown in Figure \ref{faces:pi}. Figures \ref{faces:pi} (a) and (b), comparing the AAE to DAAE and iDAAE respectively, demonstrate that for some attributes the denoising models outperform the non-denoising models. More over, the particular attributes for which the DAAE and iDAAE outperform the AAE is (fairly) consistent, both DAAE and iDAAE outperform the AAE on the (same) attributes: `Attractive', `Blond Hair', `Wearing Hat', `Wearing Lipstick'. The iDAAE outperforms on an additional attribute, `Arched Eyebrows'.

\begin{figure}
\centering
\begin{subfigure}{0.24\columnwidth}
\includegraphics[width=0.9\columnwidth]{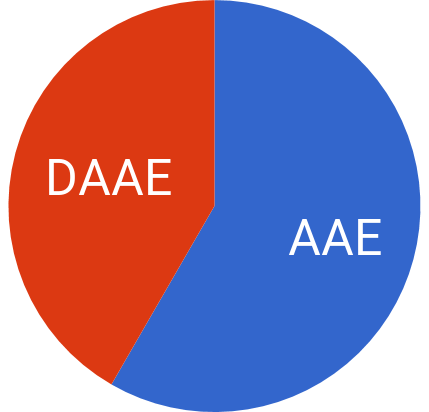}
\centering
\caption{}
\end{subfigure}
\begin{subfigure}{0.24\columnwidth}
\includegraphics[width=0.9\columnwidth]{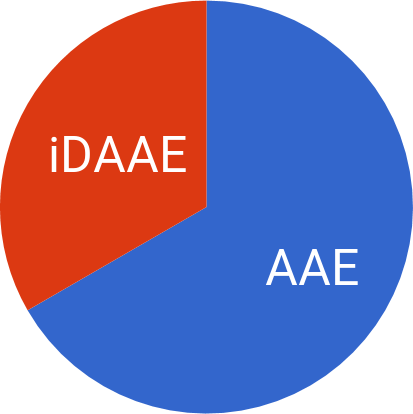}
\centering
\caption{}
\end{subfigure}
\begin{subfigure}{0.24\columnwidth}
\includegraphics[width=0.9\columnwidth]{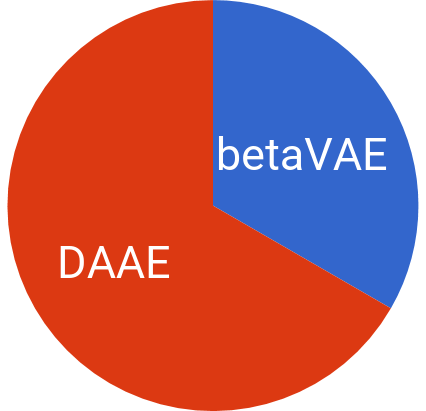}
\centering
\caption{}
\end{subfigure}
\begin{subfigure}{0.24\columnwidth}
\includegraphics[width=0.9\columnwidth]{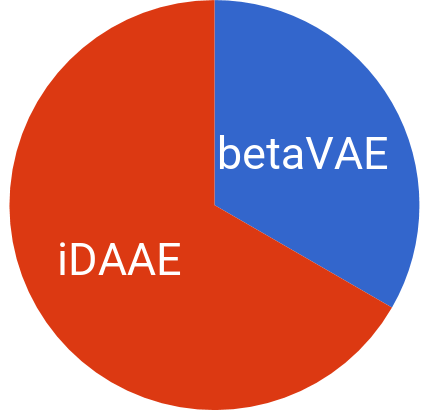}
\centering
\caption{}
\end{subfigure}
\caption{\textbf{On how many facial attributes does one model out perform another?} For each chart, each portion shows the number of facial attributes that each model out performs the other model in the same chart.}
\label{faces:pi}
\end{figure}


There are various hyper parameters that may be chosen to train these models; for the DAAE and iDAAE we may choose the level of corruption and for the iDAAE we may additionally choose the number of integration steps, $M$ used during training. We compare attribute classification results for $3$ vastly different choices of parameter settings. The results are presented as a bar chart in Figure \ref{faces:robust} for the DAAE. Additional results for the iDAAE are shown in the Appendix (Figure \ref{faces:robust_iDAAE}). These figures show that the models perform well under various different parameter settings. Figure \ref{faces:robust} suggests that the model performs better with a smaller amount of noise $\sigma=\{0.1, 0.25\}$ rather than with $\sigma=1.0$, however it is important to note that a large amount of noise does not `break' the model. These results demonstrate that the model works well for various hyper parameters, and fine tuning is not necessary to achieve reasonable results (when compared to the VAE for example). It is possible that further fine tuning may be done to achieve better results, however a full parameter sweep is highly computationally expensive.



\begin{figure}
\centering
\includegraphics[width=1.0\columnwidth]{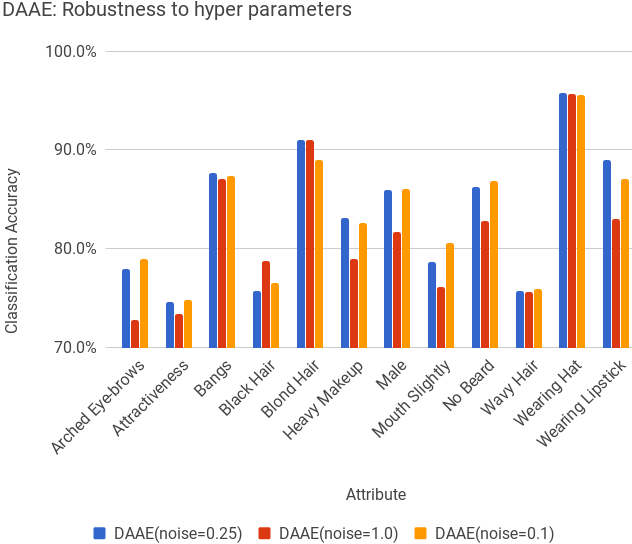}
\caption{\textbf{DAAE Robustness to hyper parameters} }
\label{faces:robust}
\end{figure}

From this section, we may conclude that with the exception of $3$ facial attributes, AAEs and variations of AAEs are able to outperform the VAE and $\beta$-VAE on the task of facial attribute classification. This suggests that AAEs and their variants are interesting models to study in the setting of learning linearly separable encodings. We also show that for a \textbf{specific set} of several facial attribute categories, the iDAAE or DAAE performs better than the AAE. This consistency, suggests that there are some specific attributes that the denoising variants of the AAE learn better than the non-denoising AAE.

\subsection{Trade-offs in Performance}  
The results presented in this section suggest that both the DAAE and iDAAE out-perform AAE models on most generation and some reconstruction tasks and suggest it is sometimes beneficial to incorporate denoising into the training of adversarial autoencoders. However, it is less clear which of the two new models, DAAE or iDAAE, are better for classification. When evaluating which one to use, we must consider both the practicalities of training, and for generative purposes, the practicalities -- primarily computational load -- of each model.

The integrating steps required for training an iDAAE means that it may take longer to train than a DAAE. On the other hand, it is possible to perform the integration process in parallel provided that sufficient computational resource is available. Further, once the model is trained, the time taken to compute encodings for classification is the same for both models. Finally, results suggest that using as few as $M=5$ integrating steps during training, leads to an improvement in classification score. This means that for some classification tasks, it may be worthwhile to train an iDAAE rather than a DAAE.


For generative tasks, neither the DAAE nor the iDAAE model consistently out-perform the other in terms of log-likelihood of synthesized samples. The choice of model may be more strongly affected by the computational effort required during training or sampling. In terms of log-likelihood on the synthesized samples, an iDAAE using even a small number of integration steps ($M=5$) during training of an iDAAE leads to better quality images being generated, and similarly using even one step of sampling with a DAAE leads to better generations.


Conflicting log-likelihood values of generated samples between testing and evaluation datasets means that these measurements are not a clear indication of how the number of sampling iterations affects the visual quality of samples synthesized using a DAAE. In some cases it may be necessary to visually inspect samples in order to assess effects of multiple sampling iterations (Figure \ref{omni_chain_sample}). 


\section{Conclusion}




We propose two types of denoising autoencoders, where a posterior is shaped to match a prior using adversarial training. In the first, we match the posterior conditional on corrupted data samples to the prior; we call this model a DAAE. In the second, we match the posterior, conditional on original data samples, to the prior. We call the second model an integrating DAAE, or iDAAE, because the approach involves using Monte Carlo integration during training.

Our first contribution is the extension of adversarial autoencoders (AAEs) to denoising adversarial autoencoders (DAAEs and iDAAEs). Our second contribution includes identifying and addressing challenges related to synthesizing data samples using DAAE models. We propose synthesizing data samples by iteratively sampling a DAAE according to a MC transition operator, defined by the learned encoder and decoder of the DAAE model, and the corruption process used during training. 

Finally, we present results on three datasets, for three tasks that compare both DAAE and iDAAE to AAE models. The datasets include: handwritten characters (Omniglot \cite{lake2015human}), a collection of human-like sprite characters (Sprites \cite{reed2015deep}) and a dataset of faces (CelebA \cite{liu2015faceattributes}). The tasks are reconstruction, classification and sample synthesis. 



\section*{Acknowledgment}

We acknowledge the Engineering and Physical Sciences Research Council for funding through a Doctoral Training studentship. We would also like to thank Kai Arulkumaran for interesting discussions and managing the cluster on which many experiments were performed. We also acknowledge Nick Pawlowski and Martin Rajchl for additional help with clusters.

\bibliographystyle{abbrv}
\bibliography{bib}



%
\clearpage
\newpage

\appendices

\section{Proofs}

\setcounter{lemma}{0}

\begin{lemma}  \label{p(x)} $\mathcal{P}(z)$ is a stationary distribution for the Markov chain defined by the sampling process in \ref{markovChain}. 
\end{lemma}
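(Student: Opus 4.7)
The plan is a direct verification that applying the transition kernel to $\mathcal{P}(z)$ returns $\mathcal{P}(z)$. Concretely, I would compute $\int T_{\theta,\phi}(z^{(t+1)}\mid z^{(t)})\,\mathcal{P}(z^{(t)})\,dz^{(t)}$ by substituting the explicit form of $T_{\theta,\phi}$ from (\ref{eqn:T}) and then reorganising the integrals with Fubini so that the integral over $z^{(t)}$ acts only on the factors $p_\theta(x^{(t)}\mid z^{(t)})\mathcal{P}(z^{(t)})$.

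The key step is to invoke the modelling assumption in (\ref{assumption1}), namely $\int p_\theta(x\mid z)\mathcal{P}(z)\,dz \approx \mathcal{P}(x)$, which collapses the inner integral over $z^{(t)}$ into $\mathcal{P}(x^{(t)})$. This is exactly the approximation that the autoencoder's decoder is trained to satisfy, so it is the only non-trivial ingredient. After this collapse, the remaining expression becomes
\[
\int\!\!\int q_\phi(z^{(t+1)}\mid \tilde{x})\,c(\tilde{x}\mid x)\,\mathcal{P}(x)\,d\tilde{x}\,dx.
\]
A second application of Fubini, grouping the $\tilde{x}$-integral with the corruption kernel, recognises the inner expression $\int q_\phi(z^{(t+1)}\mid \tilde{x})c(\tilde{x}\mid x)\,d\tilde{x}$ as $\tilde{q}_\phi(z^{(t+1)}\mid x)$ by the definition introduced earlier in Section~\ref{v1}. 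The outer integral against $\mathcal{P}(x)$ then matches the definition of $\mathcal{P}(z)$ in (\ref{pz}), yielding $\mathcal{P}(z^{(t+1)})$, which establishes stationarity.

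The main obstacle is not algebraic but conceptual: the equality is exact only up to the approximation $\int p_\theta(x\mid z)\mathcal{P}(z)\,dz \approx \mathcal{P}(x)$. I would therefore state the result as ``$\mathcal{P}(z)$ is a stationary distribution to the extent that $p_\theta(x\mid z)$ approximates the true conditional $\mathcal{P}(x\mid z)$'', mirroring the hypothesis used later in Theorem~\ref{stationary_mainß}. The remaining pieces are essentially the measure-theoretic hygiene of swapping integrals, for which it suffices that the densities are non-negative and integrable (so Tonelli applies without extra conditions). No additional machinery beyond Fubini/Tonelli and the two definitions already given in the paper is required.
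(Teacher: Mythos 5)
Your proposal is correct and is essentially the paper's own argument written out at the integral level: the paper's proof simply tracks the sampling process distributionally ($z^{(0)}\sim\mathcal{P}(z)$ gives $x^{(0)}\sim\mathcal{P}(x)$ by (\ref{assumption1}), which gives $z^{(1)}\sim\mathcal{P}(z)$ by (\ref{pz})), and your Fubini computation of $\int T_{\theta,\phi}(z^{(t+1)}\mid z^{(t)})\,\mathcal{P}(z^{(t)})\,dz^{(t)}$ is the explicit kernel-level version of exactly that, resting on the same two equations. Your remark that the result holds only up to the approximation $p_\theta(x\mid z)\approx\mathcal{P}(x\mid z)$ matches how the paper defers that hypothesis to Theorem~\ref{stationary_mainß}.
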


\begin{proof}
Consider the case where $z^{(0)} \sim \mathcal{P}(z)$. $x^{(0)} \sim p_\theta(x|z^{(0)})$ is from $\mathcal{P}(x)$, by equation \ref{assumption1}. Following the sampling process, $\tilde{x}^{(0)} \sim c(\tilde{x}|x^{(0)})$, $z^{(0)} \sim q_\phi(z|\tilde{x}^{(0)})$, $z^{(1)}$ is also from $\mathcal{P}(z)$, by equation \ref{pz}. 
\textit{Similar to proof in Bengio \cite{bengio2014deep}}. Therefore $\mathcal{P}(z)$ is a stationary distribution of the Markov chain defined by \ref{markovChain}.
\end{proof}

\begin{lemma} \label{ergodic} The Markov chain defined by the transition operator, $T_{\theta,\phi}(z_{t+1}|z_t)$ (\ref{eqn:T})is ergodic, provided that the corruption process is additive Gaussian noise and that adversarial pair, $q_\phi(z|\tilde{x})$ and $d_\chi(z)$ are optimal within the adversarial framework.
\end{lemma}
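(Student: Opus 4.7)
\medskip

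\noindent\textbf{Proof plan for Lemma \ref{ergodic}.} The plan is to apply a standard sufficient criterion for ergodicity of continuous-state Markov chains: show that for \emph{any} starting point $z^{(t)}$ and \emph{any} measurable set $A \subset \mathbb{R}^a$ of positive Lebesgue measure, the one-step transition probability $\int_A T_{\theta,\phi}(z'\mid z^{(t)})\,dz'$ is strictly positive. Together with Lemma \ref{p(x)_main} (which exhibits a stationary distribution $\mathcal{P}(z)$) and homogeneity of the chain (the kernel is time-invariant since $\theta,\phi$ are fixed at sampling time), this single property simultaneously yields $\psi$-irreducibility, aperiodicity, and positive Harris recurrence, and hence ergodicity by the standard convergence theorem for Harris chains.

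\medskip

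\noindent To establish this positivity, I would decompose the transition kernel in (\ref{eqn:T}) into its three component steps and reason about each. First, fix an arbitrary $z^{(t)} \in \mathbb{R}^a$ and let $x^{(t)} \sim p_\theta(x\mid z^{(t)})$. Because the corruption process is assumed to be additive Gaussian noise, $c(\tilde{x}\mid x^{(t)})$ is the density of $\mathcal{N}(x^{(t)}, \sigma^2 I)$, which is strictly positive on all of $\mathbb{R}^d$. Hence, conditional on $z^{(t)}$, the random variable $\tilde{x}^{(t)}$ admits a density on $\mathbb{R}^d$ that is strictly positive everywhere. Second, under the optimality assumption on the adversarial pair $(q_\phi(z\mid\tilde{x}), d_\chi(z))$, the marginal distribution induced by pushing the corrupted data through the encoder equals the prior, $\int q_\phi(z\mid\tilde{x})\,c(\tilde{x}\mid x)\,\mathcal{P}(x)\,d\tilde{x}\,dx = p(z)$, with $p(z)$ taken to be Gaussian and therefore strictly positive on $\mathbb{R}^a$. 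This forces the family $\{q_\phi(\cdot\mid\tilde{x})\}_{\tilde{x}}$ to collectively assign positive mass to every open set in $\mathbb{R}^a$.

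\medskip

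\noindent Combining the two observations, I would show that for any open $A \subset \mathbb{R}^a$, one can find a set $B \subset \mathbb{R}^d$ such that $\int_B q_\phi(A\mid\tilde{x})\,d\tilde{x} > 0$; since $\tilde{x}^{(t)}\mid z^{(t)}$ has a density that is strictly positive on all of $\mathbb{R}^d$, Fubini gives $T_{\theta,\phi}(A\mid z^{(t)}) = \int q_\phi(A\mid\tilde{x})\,c(\tilde{x}\mid x)\,p_\theta(x\mid z^{(t)})\,d\tilde{x}\,dx > 0$. This is the required everywhere-positive kernel condition, which immediately yields both irreducibility (any set of positive measure is reached in one step) and aperiodicity (any neighborhood of $z^{(t)}$ is also reached in one step). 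Along with the stationary distribution from Lemma \ref{p(x)_main}, the chain is therefore ergodic.

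\medskip

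\noindent The main obstacle I anticipate is the bridge from the \emph{marginal} optimality statement provided by adversarial training to the \emph{conditional} positivity statement required for the kernel. The adversarial criterion only guarantees that $\int q_\phi(z\mid\tilde{x})\,c(\tilde{x}\mid x)\,\mathcal{P}(x)\,d\tilde{x}\,dx$ equals the prior, not that $q_\phi(\cdot\mid\tilde{x})$ has full support for each $\tilde{x}$ individually. In the deterministic-encoder case used in the implementation, $q_\phi(z\mid\tilde{x})$ is a Dirac mass and the argument must instead be made about the pushforward under $E_\phi$ of a Gaussian centered at $R_\theta(z^{(t)})$; one needs this pushforward to charge every open set in $\mathbb{R}^a$, which follows from the marginal identity combined with the fact that the Gaussian corruption reaches a positive-measure neighborhood of every relevant point in $\mathbb{R}^d$. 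Handling this step carefully, possibly by invoking mild regularity assumptions on $E_\phi$ and $R_\theta$ (e.g.\ continuity and the generic-position property of a neural network with random-like weights), is where the bulk of the technical care will go.
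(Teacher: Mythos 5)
Your proposal is correct, and it takes a genuinely different --- and in fact tighter --- route than the paper's own proof. The paper does not attempt one-step positivity of the kernel: it argues multi-step connectivity, treating the Gaussian corruption as effectively confining $\tilde{x}$ to a hypersphere around $x$ and assuming the hyperspheres of nearby data points overlap, so that any $z_j$ can be reached from any $z_i$ through a chain of overlapping supports; aperiodicity is then argued separately, via the observation that a corrupted sample either does or does not cross an encoder decision boundary, giving at least two routes between states (with a short contradiction ruling out distinct $x_i \neq x_j$ encoding to the same $z$). Your one-step full-support argument buys irreducibility and aperiodicity simultaneously, dispenses with the sphere-overlap assumption (which sits awkwardly with the stated hypothesis of additive Gaussian noise, whose support is all of $\mathbb{R}^d$ --- exactly the fact you exploit), and replaces the informal boundary argument with a standard criterion. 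The paper's two reachability conditions are the informal counterparts of your ingredients: its decoder-approximation condition enters your argument only through Lemma \ref{p(x)_main}, and its adversarial-optimality condition is precisely your marginal identity, since in the DAAE the distribution matched to $p(z)$ is the aggregate $\int\!\!\int q_\phi(z|\tilde{x})\,c(\tilde{x}|x)\,\mathcal{P}(x)\,d\tilde{x}\,dx = \mathcal{P}(z)$.

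One remark on the obstacle you flag at the end: it is less serious than you fear, and your own Fubini step already resolves it, including in the deterministic-encoder case. You never need per-$\tilde{x}$ full support of $q_\phi(\cdot|\tilde{x})$, nor regularity of $E_\phi$ beyond measurability. For any measurable $A$ with $p(A)>0$, the marginal identity gives $\int q_\phi(A|\tilde{x})\,m(\tilde{x})\,d\tilde{x} = p(A) > 0$, where $m(\tilde{x})=\int c(\tilde{x}|x)\,\mathcal{P}(x)\,dx$ is a Gaussian mixture with strictly positive density, so the set $\{\tilde{x}: q_\phi(A|\tilde{x})>0\}$ --- which is simply $E_\phi^{-1}(A)$ in the Dirac case --- has positive Lebesgue measure. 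Since the chain's corrupted density $h(\tilde{x})=\int c(\tilde{x}|x)\,p_\theta(x|z^{(t)})\,dx$ is likewise strictly positive everywhere for every starting point $z^{(t)}$, it follows that $T_{\theta,\phi}(A \mid z^{(t)})=\int q_\phi(A|\tilde{x})\,h(\tilde{x})\,d\tilde{x}>0$; this is the full Lebesgue-irreducibility your plan calls for, not merely positivity on open sets, so the pushforward-regularity assumptions you anticipate needing can be dropped. The only caveat worth recording is that ``optimal within the adversarial framework'' must be read as exact equality of the aggregate posterior with the prior --- an idealisation, but one the paper's own proof (its condition that $q_\phi(z|\tilde{x})$ has ``matched the prior'') tacitly makes as well.
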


\begin{proof}
Consider $X=\{x: \mathcal{P}(x)>0\}$, $\tilde{X}=\{\tilde{x} : c(\tilde{x}|x)>0 \}$ and $Z=\{z: \mathcal{P}(z)>0\}$. Where $Z \subseteq \{ z :p(z)>0 \}$ and $X \subseteq \tilde{X}$:
\begin{enumerate}
    \item Assuming that $p_\theta(x|z)$ is a good approximation of the underlying probability distribution, $\mathcal{P}(x|z)$, then $\forall x_j \sim \mathcal{P}(x)$  $\exists$ $z_i \sim \mathcal{P}(z)$ s.t. $p_\theta(x_j | z_i)>0$. 
    \item Assuming that adversarial training has shapped the distribution of $q_\phi(z|\tilde{x})$ to match the prior, $p(z)$, then $\forall z_i \sim p(z)$ $\exists$ $\tilde{x}_j$ s.t. $q_\phi(z_i|\tilde{x}_j)>0$.
    This holds because if not all points in $p(z)$ could be visited, $q_\phi(z|\tilde{x})$ would not have matched the prior. 
    \end{enumerate}
1) suggests that every point in $X$ may be reached from a point in $Z$ and 2) suggests that every point in $Z$ may be reached from a point in $\tilde{X}$.
Under the assumption that $c(\tilde{x}|x)$ is an additive Gaussian corruption process then $\tilde{x}_i$ is likely to lie within a (hyper) spherical region around $x_i$. If the corruption process is sufficiently large such that (hyper) spheres of nearby $x$ samples overlap, for an $x_i$ and $x_{i+m}$ $\exists$ a set $\{x_{i+1},...,x_{i+m-1}\}$ such that, $ \sup(c(\tilde{x}|x_i)) \cap \sup (c(\tilde{x}|x_{i+1})) \neq \emptyset, \forall i=1,...,(m-1)$ and where $\sup$ is the support.
Then, it is possible to reach any $z_i$ from any $z_j$ (including the case $j=i$). Therefore, the chain is both irreducible and positive recurrent.

To be ergodic the chain must also be aperiodic:
between any two points $x_i$ and $x_j$, there is a boundary, where $x$ values between $x_i$ and the boundary are mapped to $z_i$, and points between $x_j$ and the boundary are mapped to $z_j$. By applying the corruption process to $x^{(t)}=x_i$, followed by the reconstruction process, there are always at least two possible outcomes because we assume that all (hyper) spheres induced by the corruption process overlap with at least one other (hyper) sphere: either $\tilde{x}^{(t)}$ is not pushed over the boundary and $z^{(t+1)}=z_i$ remains the same, or $\tilde{x}^{(t)}$ is pushed over the boundary and $z^{(t+1)}=z_j$ moves to a new state. The probability of either outcome is positive, and so there is always more that one route between two points, thus avoiding periodicity provided that for $x_i \neq x_j$, $z_i \neq z_j$. 
Consider, the case where for $x_i \neq x_j$, $z_i=z_j$, then it would not be possible to recover both $x_i$ and $x_j$ using $\mathcal{P}(x|z)$ and so if $\mathcal{P}(x_i|z)>0$ then $\mathcal{P}(x_j|z)=0$ (and vice verse), which is a contradiction to 1).




\end{proof}





\section{Examples Of Sythesised Samples}
\begin{figure}[H]
    \centering
    \includegraphics[width=\columnwidth]{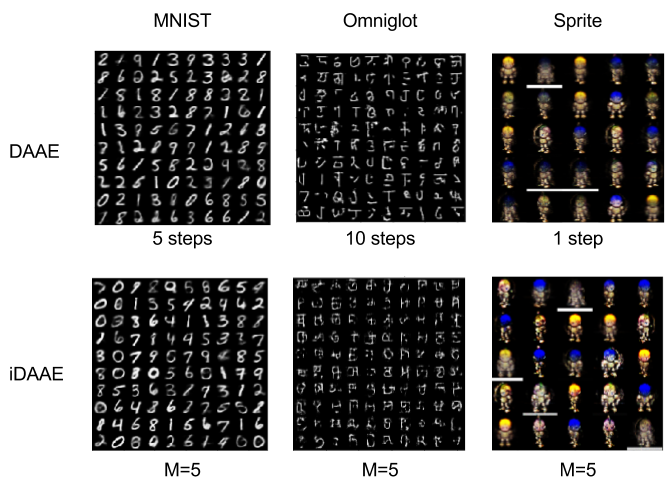}
    \caption{\textbf{Examples of synthesised samples:} Examples of randomly sytheisised data samples.}
    \label{best_gen}
\end{figure}

\section{Algorithm For Training the iDAAE}
\begin{algorithm}
\caption{iDAAE : Matching $\tilde{q}_\phi(z|x)$ to $p(z)$}
\label{algv1}
$\mathbf{x}=\{ x_1, x_2, x_{N-1}\} \sim p(x)$ {\color{g} \#draw a batch of samples from the training data}\\
\For{$k=1$ to NoEpoch}
{
    $\tilde{\mathbf{x}}=C(\mathbf{x})$ {\color{g}\#corrupt all samples}\\
    $\mathbf{z}=E_\phi(\mathbf{\tilde{x}})$ {\color{g}\#encode all corrupted samples}\\
    $\mathbf{\hat{x}}=R_\theta(\mathbf{z})$ {\color{g}\#reconstruct}\\
    \vspace{2mm}
    {\color{g} \#minimise reconstruction cost}\\
    $\mathcal{L}_{rec} = -(\mathbf{\hat{x}}\log \mathbf{x} + (1-\mathbf{\hat{x}})\log (1-\mathbf{x})).mean()$\\
    $\phi \leftarrow \phi - \alpha \nabla_\phi \mathcal{L}_{rec}$\\
    $\theta \leftarrow \theta - \alpha \nabla_\phi \mathcal{L}_{rec}$\\
    \vspace{2mm}
    {\color{g} \#match $\tilde{q}_\phi(z|x)$ to $p(z)$ using adversarial training}\\
    $\mathbf{z}_{fake}=$approx\_z($\mathbf{x}$)  {\color{g} \#draw samples for $\tilde{q}_\phi(z|x)$}\\
    $\mathbf{z}_{real} \sim p(z)$ {\color{g} \#draw samples from prior $p(z)$}\\
    {\color{g} \#train the discriminator}\\
    $\mathcal{L}_{dis}=-\frac{1}{N} \sum_{i=0}^{N-1} \log D_\chi(\mathbf{z}_{{real}_i})$\\ $-\frac{1}{N} \sum_{i=0}^{N-1} \log (1-D_\chi(\mathbf{z}_{{fake}_i})) $ \\
    $\chi \leftarrow \chi - a \nabla_\chi \mathcal{L}_{dis}$\\
    {\color{g} \#train the decoder to match the prior}\\
    $\mathcal{L}_{prior} = \frac{1}{N} \sum_{i=0}^{N-1} \log(1-D_\chi( \mathbf{z}_{{fake}_i}))$\\
    $\phi \leftarrow \phi - \alpha \nabla_\phi$   
}
\end{algorithm}

\section{Algorithm For Monte Carlo Integration}
\begin{algorithm}
\caption{Drawing samples from $\tilde{q}_\phi(z|x)$}
\label{algz}
\underline{function: approx\_z($\{x_1, x_2,...x_{N-1}\}$)}\\
\For{$i=0$ to $N-1$}
{   
    $\hat{z_i}=[$ $]$\\
    \For{$j=1$ to $M$}
    {
        $\tilde{x}_{i,j} = C(x_i)$\\
        $z_{i,j} = E_\phi(\tilde{x}_j)$\\    
    }
    $\hat{z_i}.append(\frac{1}{M}\sum_{j=1}^M z_j)$ \\  
}
\textbf{return} $\mathbf{\hat{z}}=\{\hat{z}_1,\hat{z}_2...\hat{z}_{N-1}\}$
\end{algorithm}

\section{MNIST Results}

\subsection{Datasets: MNIST}
The MNIST dataset consists of grey-scale images of handwritten digits between $0$ and $9$, with $50$k training samples, $10$k validation samples and $10$k testing samples. The training, validation and testing dataset have an equal number of samples from each category. The samples are $28$-by-$28$ pixels. The MNIST dataset is a simple dataset with few classes and many training examples, making it a good dataset for proof-of-concept. However, because the dataset is very simple, it does not necessarily reveal the effects of subtle, but potentially important, changes to algorithms for training or sampling. For this reason, we consider two datasets with greater complexity.

\subsection{Architecture and Training: MNIST}
For the MNIST dataset, we train a total of $5$ models detailed in Table \ref{MNIST_models}. The encoder, decoder and discriminator networks each have two fully connected layers with $1000$ neurons each. For most models the size of the encoding is $10$ units, and the prior distribution that the encoding is being matched to is a $10$D Gaussian.
All networks are trained for $100$ epochs on the training dataset, with a learning rate of $0.0002$ and a batch size of $64$. The standard deviation of the additive Gaussian noise used during training is $0.5$ for all iDAAE and DAAE models.

An additional DAAE model is trained using the same training parameters and networks as described above but with a mixture of $10$ $2$D Gaussians for the prior. Each $2$D Gaussian with standard deviation $0.5$ is equally spaced with its mean around a circle of radius $4$ units. This results in a prior with $10$ modes separated from each other by large regions of very low probability. This model of the prior is very unrealistic, as it assumes that MNIST digits occupy distinct regions of image probability space. In reality, we may expect two numbers that are similar to exist side by side in image probability space, and for there to exist a smooth transition between handwritten digits. As a consequence of this, this model is intended specifically for evaluating how well the posterior distribution over latent space may be matched to a mixture of Gaussians - something that could not be achieved easily by a VAE \cite{kingma2013auto}.


\begin{table}
\centering
\caption{\textbf{Models trained on MNIST:} $5$ models are trained on the MNIST dataset. \textit{Corruption} indicates the standard deviation of Gaussian noise added during the corruption process, $c(\tilde{x}|x)$. \textit{Prior} indicated the prior distribution imposed on the latent space. \textit{M} is the number of Monte Carlo integration steps (see Algorithm \ref{algz} in Appendix) used during training - this applies only to the iDAAE.}
\begin{tabular}{ccccc}
\toprule
ID & Model & Corruption & Prior &$M$\\ \midrule
1 & AAE & 0.0 & 10D Gaussian & - \\
2 & DAAE & 0.5 & 10D Gaussian & - \\
3 & DAAE & 0.5 & 10-GMM & - \\
4 & iDAAE & 0.5 & 10D Gaussian & 5 \\
5 & iDAAE & 0.5 & 10D Gaussian & 25 \\
\bottomrule
\label{MNIST_models}
\end{tabular}
\end{table}

\subsection{Reconstruction: MNIST}
Table \ref{MNIST_rec} shows reconstruction error on samples from the testing dataset, for each of the models trained on the MNIST training dataset. Reconstruction error for the DAAE and iDAAE trained with a 10D Gaussian prior do not out-perform the AAE. However, the reconstruction task for the AAE model was less challenging than that of the DAAE and iDAAE models because samples were not corrupted before the encoding step. The best model for reconstruction was the iDAAE with $M=5$, increasing $M$ to $25$ led to worse reconstruction error, as reconstructions tended to appear more like mean samples. 

Reconstruction error for the DAAE trained using a 2D mixture of 10 Gaussians under-performed compared to the rest of the models. All reconstructions looked like mean images, which may be expected given the nature of the prior. 

\begin{table}
\centering
\caption{\textbf{MNIST Reconstruction:} \textit{Recon.} shows the mean squared error for reconstructions of corrupted test data samples accompanied by the standard error. \textit{Corruption} is the standard deviation of the additive Gaussian noise used during training and testing.}
\begin{tabular}{lcccc}\toprule
    \multicolumn{4}{c}{Model \& Training} & Recon.\\ \cmidrule(lr){1-4}\cmidrule(lr){5-5}
    Model & Corruption & Prior & $M$ & Mean $\pm$ s.e. \\\cmidrule(lr){1-1} \cmidrule(lr){2-2} \cmidrule(lr){3-3} \cmidrule(lr){4-4} \cmidrule(lr){5-5} 
    AAE & 0.0 & 10D Gaussian &-& 0.017$\pm$0.001\\
    DAAE& 0.5 & 10D Gaussian & - &0.023$\pm$0.001\\
    DAAE& 0.5 & 2D 10-GMM &-& 0.043$\pm$0.001\\
    iDAAE&0.5&10D Gaussian& 5 & 0.022$\pm$0.001\\
    iDAAE&0.5&10D Gaussian& 25 & 0.026$\pm$0.001\\
    \bottomrule
\end{tabular}
\label{MNIST_rec}
\end{table}

\subsection{Sampling: MNIST}
\label{sampling_mnist}
To calculate the log-likelihood of samples drawn from DAAE, iDAAE and AAE models trained on the MNIST dataset, a Parzen window is fitted to $1\times 10^4$ generated samples from a single trained model. The bandwidth for constructing the Parzen window was selected by choosing one of $10$ values, evenly space along a log axis between $-1$ and $0$, that maximises the likelihood on a validation dataset. The log-likelihood for each model was then evaluated on the test dataset. 

Table \ref{MNIST_ll} shows the log-likelihood on the test dataset for samples drawn from models trained on the MNIST training dataset. First, we will discuss models trained with a $10$D Gaussian prior. The best log-likelihood was achieved by the iDAAE with $M=5$. Synthesised samples generated by MC sampling from the DAAE model caused the log-likelihood to decrease. This may be because the samples tended towards mean samples, dropping modes, causing the log-likelihood to decrease. Initial samples and those obtained after $5$ iterations of sampling are shown in Figure \ref{mnist_mcmc_200}. The samples in \ref{mnist_mcmc_200}(d) are clearer than in \ref{mnist_mcmc_200}(c). Although mode-dropping is not immediately apparent, note that the digit 4 is not present after 5 iterations.


Now, we consider the DAAE model trained with a $2$D, $10$-GMM prior. Samples drawn from the prior are shown in Figure \ref{mnist_mcmc}(a). The purpose of this experiment was to show the effects of MC sampling, where the distribution from which initial  samples of $z_0$ are drawn is significantly different to the prior. Samples of $z_0$ were drawn from a normal distribution and passed through the decoder of the DAAE model to produce initial image samples, see Figure \ref{mnist_mcmc}(c). A further $9$ steps of MC sampling were applied to synthesise the samples shown in Figure \ref{mnist_mcmc}(d). As expected, the initial image samples do not look like MNIST digits, and MC sampling improves samples dramatically. Unfortunately however, many of the samples appear to correspond to samples at modes of the data distribution. In addition, several modes appear to be missing from the model distribution. This may be attributed to the nature of the prior, since we did not encounter this problem to the same extent when using a $10$D Gaussian prior (see Figure \ref{mnist_mcmc_200}).

Synthesising MNIST samples is fairly trivial, since there are many training examples and few classes. For this reason, it is difficult to see the benefits of using DAAE or iDAAE models compared to AAE models. Now, we focus on the two more complex datasets, Omniglot and Sprites. We find that iDAAE models and correctly sampled DAAE models may be used to synthesise samples with higher log-likelihood than samples synthesised using an AAE.


\begin{table}
\centering
\caption{\textbf{MNIST log-likelihood of $p_\theta(x)$:} To calculate the log-likelihood of $p_\theta(x)$, a Parzen window was fit to $10^4$ generated samples and the mean likelihood was reported for a testing data set. The bandwidth used for the Parzen window was determined using a validation set. The training, test and validation datasets had different samples. }
\begin{tabular}{lccccc}
    \toprule
    \multicolumn{4}{c}{Model \& Training} & \multicolumn{2}{c}{Log-likelihood} \\ \cmidrule(lr){1-4}\cmidrule(lr){5-6}
    Model & Corruption & Prior & $M$ & $x^{(0)}$ & $x^{(5)}$\\\cmidrule(lr){1-1} \cmidrule(lr){2-2} \cmidrule(lr){3-3} \cmidrule(lr){4-4} \cmidrule(lr){5-5} \cmidrule(lr){6-6}
    AAE & 0.0 & 10D Gaussian & - & 529 & - \\
    DAAE & 0.5 & 10D Gaussian & - & 529 & 444 \\
    DAAE & 0.5 & 2D 10-GMM & - & 9 & 205 \\ 
    iDAAE & 0.5 & 10D Gaussian & 5 & 532 & -\\
    iDAAE & 0.5 & 10D Gaussian & 25 & 508 & - \\
    \hline
\end{tabular}
\label{MNIST_ll}
\end{table}

\begin{figure}
\centering
\begin{subfigure}{0.4\columnwidth}
\includegraphics[width=0.8\columnwidth, height=0.9\columnwidth]{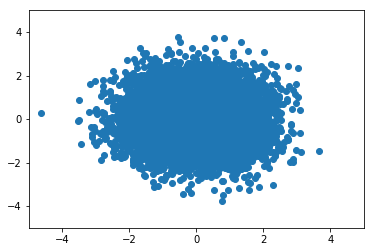}
\caption{Prior (one channel)}
\end{subfigure}
\begin{subfigure}{0.4\columnwidth}
\includegraphics[width=0.9\columnwidth, height=0.9\columnwidth]{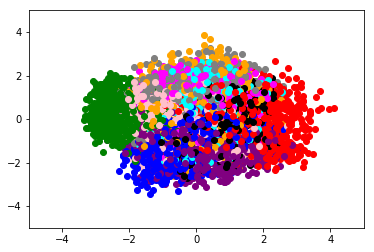}
\caption{Latent samples}
\end{subfigure}
\begin{subfigure}{0.4\columnwidth}
\includegraphics[width=0.9\columnwidth, height=0.9\columnwidth]{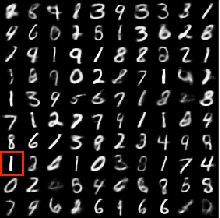}
\caption{$x^{(0)}$}
\end{subfigure}
\begin{subfigure}{0.4\columnwidth}
\includegraphics[width=0.9\columnwidth]{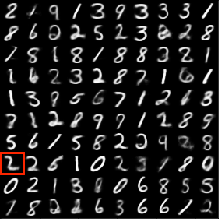}
\caption{$x^{(5)}$}
\end{subfigure}
\caption{\textbf{MNIST Markov chain (MC) sampling: One channel of the 10D Gaussian Prior} a) One channel of the 10D Gaussian prior used to train the DAAE. b) Samples drawn from $q_\phi(z|\tilde{x})$, projected in to $2$D space, and colour coded by the digit labels. c) Initial samples, $x^{(0)}$ generated by a normal distribution. d) \textbf{Corresponding} samples after $5$ iterations of MC sampling. Notice how the highlighted ``$1$" changes to a ``$2$" after $5$ iterations of MC sampling.}
\label{mnist_mcmc_200}
\end{figure}

\begin{figure}
\centering
\begin{subfigure}{0.45\columnwidth}
\includegraphics[width=0.9\columnwidth, height=0.9\columnwidth]{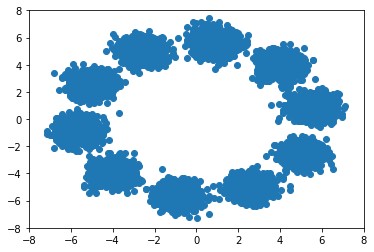}
\caption{2D Gaussian Mixture prior}
\label{}
\end{subfigure}
\begin{subfigure}{0.45\columnwidth}
\includegraphics[width=0.9\columnwidth, height=0.9\columnwidth]{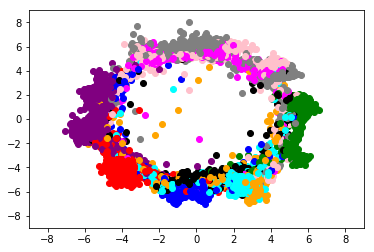}
\caption{Latent samples }
\label{}
\end{subfigure}
\begin{subfigure}{0.45\columnwidth}
\includegraphics[width=0.9\columnwidth]{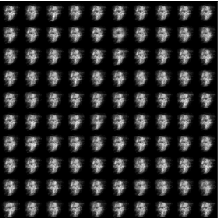}
\caption{$x^{(0)}$}
\label{}
\end{subfigure}
\begin{subfigure}{0.45\columnwidth}
\includegraphics[width=0.9\columnwidth]{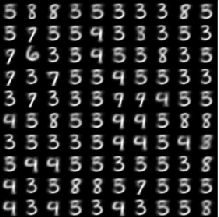}
\caption{$x^{(9)}$}
\label{}
\end{subfigure}
\caption{\textbf{MNIST Markov chain (MC) sampling:} a) 2D Gaussian Mixture prior used to train a DAAE. b) Samples drawn from $q_\phi(z|\tilde{x})$ projected into 2D space using PCA and colour coded by the digit labels. c) Initial sample, $x^{(0)}$, generated by a normal distribution. d) \textit{Corresponding} samples generated after 9 further iterations of MC sampling. }
\label{mnist_mcmc}
\end{figure}

\subsubsection{Classification: MNIST}
The MNIST dataset consists of $10$ classes, $[0,9]$, the classification task involves correctly predicting a label in this interval. For the MNIST dataset, the SVM classifier is trained on encoded samples from the MNIST training dataset and evaluated on encoded samples from the MNIST testing dataset, results are shown in Table \ref{MNIST_class}. 

First, we consider the results for DAAE, iDAAE and AAE models trained with a $10$D Gaussian prior. Classifiers trained on encodings extracted from the encoders of trained DAAE, iDAAE or AAE models out-performed classifiers trained on PCA of image samples. Classifiers trained on the encodings extracted from the encoders of learned DAAE and iDAAE models out-performed those trained on the encodings extracted from the encoders of the AAE model.

The differences in classification score for each model on the MNIST dataset are small; this might be because it is relatively easy to classify MNIST digits with very high accuracy \cite{simard2003best}. We turn now to a more complicated dataset, the Omniglot dataset, to better show the benefits of using denoising when training adversarial autoencoders.

\begin{table}
\centering
\caption{\textbf{MNIST Classification:} SVM classifiers with RBF kernels were trained on encoded MNIST training data samples. The samples were encoded using the encoder of the trained AAE, DAAE or iDAAE models. Classification scores are given for the MNIST test dataset.}
\begin{tabular}{lcccc}\toprule
    \multicolumn{4}{c}{Model \& Training} & \multirow{2}{*}{Accuracy}\\ \cmidrule(lr){1-4}
    Model & Corruption & Prior & $M$ & \% \\\cmidrule(lr){1-1} \cmidrule(lr){2-2} \cmidrule(lr){3-3} \cmidrule(lr){4-4} \cmidrule(lr){5-5} 
    AAE & 0.0 & 10D Gaussian &-& 95.19\%\\
    DAAE & 0.5 & 10D Gaussian & - &95.28\%\\
    DAAE & 0.5 & 2D 10-GMM &-& 73.81\%\\
        iDAAE&0.5&10D Gaussian& 5 & 96.48\%\\
        iDAAE&0.5&10D Gaussian & 25 & 96.24\%\\\hdashline[.4pt/1pt]
    \multicolumn{4}{l}{PCA + SVM} & 94.73\% \\
    \bottomrule
\end{tabular}
\label{MNIST_class}
\end{table}

\section{Details for calculating log-likelihood}
\label{sec:ll}

\subsection{Omniglot}
To calculate log-likelihood of samples, a Parzen window was fit to $1\times 10^3$ synthesised samples, where the bandwidth was determined on the testing dataset in a similar way to that in Section \ref{sampling_mnist}. The log-likelihood was evaluated on both the evaluation dataset, and the testing dataset. To compute the log-likelihood on the of the testing dataset a Parzen window was fit to a new set of synthesized samples, different to those used to calculate the bandwidth. The results are shown in Figure \ref{omni_ll}.

\subsection{Sprites}
To calculate log-likelihood of samples, a Parzen window was fit to $1 \times 10^3$ synthesized samples. The bandwidth was set as previously described in Section \ref{sampling_mnist}; we found the optimal bandwidth to be 1.29.

\section{iDAAE Robustness to hyper parameters}
\begin{figure}
\centering
\includegraphics[width=0.9\columnwidth]{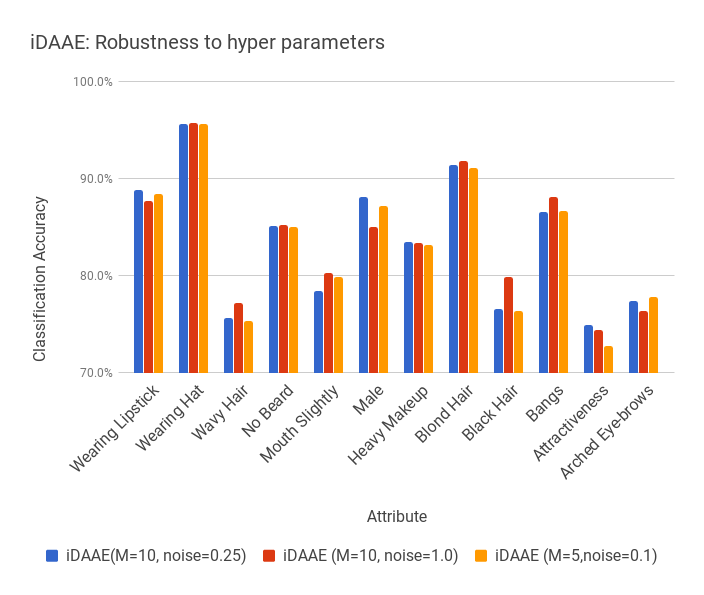}
\caption{iDAAE}
\caption{\textbf{Robustness to hyper parameters}}
\label{faces:robust_iDAAE}
\end{figure}

\end{document}